\documentclass{article}

\usepackage[preprint]{neurips_2026}

\usepackage[utf8]{inputenc} 
\usepackage[T1]{fontenc}    
\usepackage{hyperref}       
\usepackage{url}            
\usepackage{booktabs}       
\usepackage{amsfonts}       
\usepackage{nicefrac}       
\usepackage{microtype}      
\usepackage{xcolor}         

\usepackage{graphicx}
\usepackage{subcaption}
\usepackage{amsmath}
\usepackage{amssymb}
\usepackage{mathtools}
\usepackage{amsthm}
\usepackage{braket}
\usepackage{cleveref}
\usepackage{ulem}

\theoremstyle{plain}
\newtheorem{theorem}{Theorem}[section]
\newtheorem{proposition}[theorem]{Proposition}
\newtheorem{lemma}[theorem]{Lemma}
\newtheorem{corollary}[theorem]{Corollary}
\theoremstyle{definition}

\newtheorem{assumption}[theorem]{Assumption}
\theoremstyle{remark}

\title{Action-Factored Multi-Agent Reinforcement Learning for Scalable Quantum Device Tuning}

\author{
  Edwin De Nicolo\thanks{Equal Contribution. Order determined alphabetically.} \\
  Department of Physics \\
  University of Oxford \\
  United Kingdom \\
  \texttt{edwin.denicolo@merton.ox.ac.uk} \\
  \And
  Rahul Marchand\thanks{Equal Contribution. Order determined alphabetically.} \\
  Department of Engineering Science \\
  University of Oxford \\
  United Kingdom \\
  \texttt{rahul.marchand@new.ox.ac.uk} \\
  \And
  Cornelius Carlsson \\
  Department of Engineering Science \\
  University of Oxford \\
  United Kingdom \\
  \texttt{cornelius.carlsson@spc.ox.ac.uk} \\
  \And
  Pranav Vaidhyanathan \\
  Department of Engineering Science \\
  University of Oxford \\
  United Kingdom \\
  \texttt{pranav@robots.ox.ac.uk} \\
  \And
  Natalia Ares \\
  Department of Engineering Science \\
  University of Oxford \\
  United Kingdom \\
  \texttt{natalia.ares@eng.ox.ac.uk} \\
}

\begin{document}

\maketitle

\begin{abstract}
 Cooperative multi-agent reinforcement learning is well suited to problems with large parameter spaces and exploitable local structure, such as the tuning of electrostatically-defined quantum-dot arrays. However, if parameter cross-talk is strong, a non-stationary environment from the perspective of any individual agent can destabilize learning -- the same effect that plagues manual tuning of such systems. We propose using a factored representation of the action space, learned online, to decouple agents and minimize their interference. Our framework, QADAPT, uses this factorization to efficiently learn shared policies based on local measurements and rewards. With this modular strategy, we achieve zero-shot generalization to unseen quantum device sizes and maintain an approximately constant number of convergence steps to reach target regimes. This work provides a scalable route toward the rapid calibration of large-scale quantum processors.
\end{abstract}

\section{Introduction}
\label{intro}

Reinforcement learning (RL) provides a general framework for sequential decision-making under uncertainty, where agents learn policies through interaction with an environment~\cite{sutton2018reinforcement}. Combining RL with deep neural networks has achieved human-level performance on complex, high-dimensional tasks~\cite{mnih2015human}, and its extension to continuous control domains has allowed agents to learn efficiently from sensory inputs~\cite{lillicrap2020continuous}. This opens practical applications for robotics and locomotion, and more recently, has bridged RL with the field of quantum computing for tasks such as error-robust quantum gate design~\cite{baum2021experimental}, analog pulse shaping~\cite{semola2022deep}, real-time qubit feedback~\cite{reuer2023realizing, vaidhyanathan2026quantum}, and voltage tuning of quantum dots~\cite{nguyen2021deep}. While RL, and machine learning in general, is becoming central to quantum control~\cite{alexeev2025artificial, cao2026qcaleval}, a key challenge lies in scaling these computational methods efficiently to larger devices \cite{ares2021machine, zwolak2024data, kondo2025environment, metasym, schorling2025meta, bukov2026reinforcement}. 

Semiconductor spin qubits in particular, while a compelling quantum computing platform \cite{chatterjee2021semiconductor, bartee2025spin, steinacker2025industry, dijkema2026simultaneous}, face enormous control complexity due to measurement noise, device non-uniformities, and cross-talk. The tuning procedure involves calibrating dc gate voltages to create confined islands of charge, i.e. quantum dots, as shown in Fig.~\ref{Fig1}a. There exist two main gate types; \textit{plunger gates} predominantly control the electrochemical potential of quantum dots, hence their charge occupations, and \textit{barrier gates} modulate the tunnel coupling between dots. Because of dense gate electrode arrangements, adjusting one gate voltage perturbs neighboring dots via capacitive cross-talk~\cite{borsoi2024shared}. Measuring the system's response as a function of two gate voltages produces images known as \textit{charge stability diagrams} (CSDs). These diagrams contain relevant tuning and cross-talk information, which may be inferred from the position and curvature of the line features they contain. As shown in Fig.~\ref{Fig1}b, the objective is to reach a voltage configuration that brings all charge occupations and tunnel-couplings to a predefined target.

Even with existing autotuning approaches -- including fully autonomous spin qubit tuning~\cite{schuff2026fully} and a broad class of quantum dot tuning methods~\cite{moon2020machine, zwolak2020autotuning, roux2025rapid, carlsson2025automated} -- demonstrations beyond two-dot (sub)systems are lacking. Sequential tuning of smaller sub-systems offers no guarantee of global optimality, and designing meaningful score functions for supervised or Bayesian algorithms becomes increasingly non-trivial as parameter spaces grow and tuning landscapes become highly non-convex. Since data acquisition remains the primary time bottleneck in practice, stricter sample efficiency is also demanded.

To address these bottlenecks, we present QADAPT (Quantum Arrays of Dots in Automated Parallel Tuning), a multi-agent reinforcement learning (MARL) framework for tuning quantum devices at scale. Our key insight is that learning a factored representation of the voltage parameter space online enables decoupled control over each dots' electrochemical potential. Under this factorization, QADAPT becomes highly amenable to a \textit{centralized training with decentralized execution} paradigm (CTDE), whereby agents not only act independently, conditioned on local observations, but can also benefit from physically-motivated policy sharing strategies. Our resulting modular architecture mitigates the combinatorial growth of action spaces associated with larger devices, enabling parallelization and zero-shot generalization as newly introduced agents can directly inherit existing policies.
 
In summary, our core contributions are:
\begin{itemize} 
    \item {\textbf{Adaptive Action-Space Factorization:} We develop a Kalman-guided pipeline that estimates the local gate-to-dot capacitance matrix online and uses it to construct a virtual action basis. This structure-aware reparameterization reduces cross-agent action interference and improves sample efficiency.}
    \item {\textbf{Role-Shared Decentralized MARL:} We formulate quantum dot tuning as a cooperative continuous Decentralized Partially Observable Markov Decision Processes (Dec-POMDP) and introduce a modular actor-critic architecture with gate-type-specific parameter sharing~\cite{gupta2017cooperative}. This decomposes a high-dimensional global search problem while keeping the number of learned policies independent of array size.}
    \item \textbf{Zero-Shot Scalability and Empirical Evaluation:} We demonstrate that QADAPT zero-shot generalizes across different system sizes, unseen device configurations, and various non-idealized physical effects without retraining. We empirically evaluate our framework against various state-of-the-art baselines and demonstrate superior performance across all benchmarks.  
\end{itemize}

\begin{figure*}[ht]
  \begin{center}
    \centerline{\includegraphics[width=\textwidth]{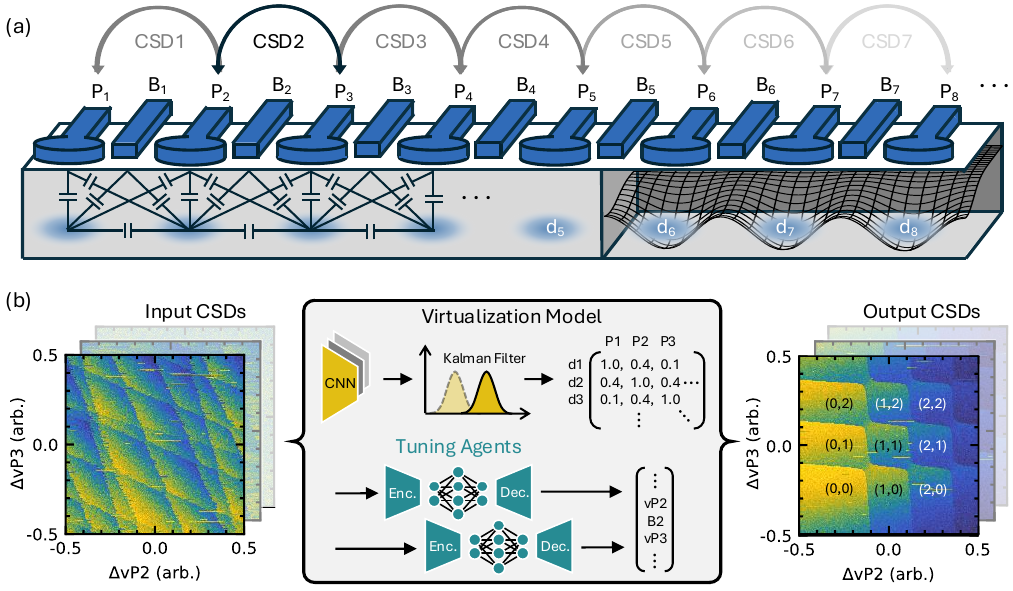}}
    \caption{\textbf{Algorithmic Flow of QADAPT} \textbf{(a)} Barrier gates ($\mathrm{B_x}$) interdigitate plunger gates ($\mathrm{P_x}$), and primarily control dot-dot tunnel couplings and charge occupations, respectively. All gates contribute in defining the confinement potential necessary to form quantum dots ($\mathrm{d_1...d_D}$). Gate cross-talk can be approximated by a capacitive network, which in general extends beyond nearest neighbors, greatly complicating array tuning. Charge stability diagrams (CSDs) are acquired by sweeping the voltage on neighboring pairs of plunger gates and bear information on the array's current tuning state. \textbf{(b)} Each CSD is passed through a virtualization model and tuning agent model. The virtualization model returns relative cross-talk estimates between $\mathrm{P_x}$ and dots $\mathrm{d_{x-1},d_x,d_{x+1}}$, refined by a Kalman filter at each tuning step, resulting in a cross-capacitance matrix that enables decoupled plunger control. Tuning agents operate in the decoupled plunger gate basis, outputting decentralized actions in the form of new virtual voltage setpoints denoted by $\mathrm{vP_x}$. Through collective optimization of local objectives, a global target configuration is reached with a desired level of tunnel-coupling between dots, and charge occupation across all dots in the array (right).}
    \label{Fig1}
  \end{center}
\end{figure*}

\section{Related Work}
\label{related}

Automatic tuning of semiconductor quantum dots spans simple physics-inspired heuristics~\cite{baart2016computer, yon2025experimental} to machine-learning-assisted pipelines \cite{schuff2026fully, carlsson2025automated}. Recent works have introduced various classifiers to enable the autonomous tuning of quantum dots~\cite{kalantre2019machine, ziegler2023tuning, van2025all}, reach target charge occupations~\cite{durrer2020automated, czischek2021miniaturizing}, and calibrate tunnel couplings~\cite{van2020quantum}. In conjunction with Bayesian optimization, classifier methods have also proven successful across multiple device architectures~\cite{severin2024cross}, yet, their validation is restricted beyond double-dot systems. 

The standard strategy for larger systems relies on sequential tuning: capacitive crosstalk is probed iteratively and quantum dots are tuned using virtual voltage controls via an "n+1" procedure ~\cite{volk2019loading}. Using CSDs as input, the underlying feature-tracking task for gate virtualization has been offloaded to vision models, including Convolutional Neural Networks (CNNs)~\cite{tsuzuki2026cnn}, U-Net-like architectures~\cite{rao2025modular, muto2025automatic} and Transformers~\cite{hader2025automated, marchand2025end}, but requires additional pre-processing for larger systems~\cite{oakes2020automatic}. In sequential tuning, however, devices are prone to biasing into locally compatible rather than globally optimal voltage configurations. Furthermore, post-hoc and fixed cross-capacitance estimates greatly limit practical voltage tuning windows.

To cope with the combinatorial growth of joint action spaces, decentralized policies are often preferred; however, the coordination of multiple agents can be challenging when cross-talk is present due to ambiguous credit assignment and conflicting local objectives~\cite{oroojlooy2023review}. The simplest approach -- independent Q-learning (IQL)~\cite{tan1993multi}, where each agent treats others as part of the environment -- generally lacks convergence guarantees due to the non-stationarity induced by simultaneous policy updates. A substantial body of work has addressed these challenges, which we group into five categories:

\textbf{Value function factorization} decomposes the joint action-value function into per-agent utilities, enabling decentralized execution under a shared reward signal. In deep RL, VDN introduced additive value decomposition, later generalized by QMIX through monotonic mixing, and by QTRAN through unconstrained surrogate factorization~\cite{sunehag2017value,rashid2020monotonic,son2019qtran}. COMIX and FacMADDPG extend factorization to the continuous control domain using cross-entropy optimization and factored critics, respectively~\cite{de2020deep}.

\textbf{Centralized critics} augment each agent's value estimate with global information. COMA~\cite{foerster2018counterfactual} introduced a counterfactual baseline to produce agent-specific advantage estimates, directly addressing credit assignment. In the multi-agent setting, MADDPG~\cite{lowe2017multi} conditioned each agent's critic on the joint observations and actions of all agents. FACMAC~\cite{peng2021facmac} combined the factored critics of QMIX with the centralized policy gradients of MADDPG.

\textbf{Learning to communicate} enables agents to selectively exchange information to improve coordination, either through graph-structured policies or learned message-passing protocols~\cite{sukhbaatar2016learning}. Most notably, GPG~\cite{khan2020graph} introduced parametrized swarm policies over a proximity graph, InforMARL~\cite{nayak2023scalable} proposed graph attention over local neighborhoods, and MADDPG-M~\cite{kilinc2018multi} learns explicit communication channels alongside task policies.

\textbf{Consensus-based methods} coordinate agents through shared latent representations. These representations can be formed either from joint observations, such as in CMAT~\cite{zhao2026bridging, wen2022multi} using an auto-regressive Transformer decoder, or from local observations, for instance via contrastive learning~\cite{feng2024hierarchical}.

\textbf{Parameter sharing and specialization} shares policies across agents with similar roles, reducing the number of independently trained networks and improving sample efficiency~\cite{christianos2021scaling}. IPPO and MAPPO~\cite{de2020independent, yu2022surprising} showed that shared PPO policies achieve strong cooperative performance, which HAPPO~\cite{kuba2021trust,zhong2024heterogeneous} extended to the heterogeneous agent case. HyperMARL~\cite{tessera2024hypermarl} further mitigated gradient interference using hypernetworks conditioned on agent identity.

In QADAPT, we take a fundamentally different approach to cooperative MARL. Rather than forcing coordination among coupled agents, we propose to decouple agents altogether by learning a \textit{factored action space representation}. By transforming the joint control problem into an approximately separable sum of local objectives, cooperative outcomes naturally follow (see Appendix \ref{app:proof}.)

\section{Methods}
\label{methods}

\subsection{Problem Formulation and Simulation Setting}
We simulate the quantum dot tuning environment using QArray~\cite{van2024qarray, van2024codebase}, acquiring CSDs by sweeping the voltage on pairs of nearest neighbor plunger gates. The array is considered optimally tuned when all inter-dot tunnel couplings meet a pre-defined target value, and each dot contains exactly one charge, as in Fig.~\ref{Fig1}b (right). These conditions are satisfied over a voltage manifold that constitutes a small (<1\%) fraction~\cite{schuff2026fully} of the total voltage hyper-volume, the center of which we denote as the target voltage configuration $\mathbf{v^*}$. We note that orthogonal pairs of transition lines in each CSD are an indicator of accurate cross-talk compensation, however, they are not a pre-requisite for convergence.

We formulate this tuning process as a Decentralized Partially Observable Markov Decision Process (Dec-POMDP) utilizing $2N-1$ agents~\cite{bernstein2002complexity}, representing one agent for each device gate. 
\begin{itemize}
    \item \textbf{State \& Observations:} The system state $s^t = (\mathbf{v}^t_p, \mathbf{v}^t_b) \in \mathbb{R}^{2N-1}$ is the vector of all plunger and barrier gate voltages. At time step $t$, agent $i$ receives a local observation $o_i^t$ consisting of its current normalized voltage $v_i^t \in [-1,1]$ concatenated with a CNN representation of its local CSD scans.
    \item \textbf{Actions:} Each agent outputs a normalized action $\hat{a}_i \in [-1, 1]$, corresponding to an absolute voltage assignment $a_i^t = v_i^{t+1}$. The transition dynamics over the joint action space $\mathbf{a^t}$ are fully deterministic, given by $P(s^{t+1} | s^t, \mathbf{a}_t) = \delta(s^t - \mathbf{a^t})$.
    \item \textbf{Rewards:} Due to varying \textit{lever arms} across simulated devices, identical voltage changes can induce different physical responses. We apply reward shaping~\cite{hu2020learning} to define a normalized per-agent reward $r_i^t \in [0, 1]$, rescaled according to the agent's hidden lever arm $\alpha_i$ and the voltage tuning range. The agents' goal is to maximize the cumulative return $J = \mathbb{E}_{\pi_\theta} \left[\sum_{t=0}^T \sum_{i=1}^N r_i^t\right]$, which in our cooperative setting is equivalent to optimizing a common global objective~\cite{rashid2020monotonic}.
\end{itemize}

\subsection{Adaptive Gate Virtualization}
To mitigate action interference caused by capacitive cross-talk, QADAPT learns a factored representation of the plunger action space online. At each time step, CSDs are passed through a lightweight CNN, $f_{\mathrm{virt}}(\cdot)$, to estimate the capacitances between each plunger gate and its surrounding dots (up to two neighbors away). 

A Kalman filter incrementally updates these estimates, treating the unknown couplings as latent state variables, and uses new measurements as observations for Bayesian updates~\cite{chen2003bayesian, fang2018nonlinear}. This process yields a time-dependent $D \times D$ cross-capacitance matrix $\Phi_t$, which maps physical plunger voltages $\mathbf{v}_p$ to virtual voltages $\mathbf{u}_p=\Phi_t\mathbf{v}_p$.

In this space, the new control variables $u_i$ act approximately independently by diagonalizing the local coupling between gates and dots. Let $\mathbf{e}_t$ denote the local device-state error and let $C_t$ be the local response Jacobian. Under the virtual action update $\Delta \mathbf{v}_p^t=\Phi_t^{-1}\Delta \mathbf{u}_p^t$, the one-step quadratic model becomes:
\begin{equation}
    \frac{1}{2}\|\mathbf{e}_t+C_t\Phi_t^{-1}\Delta \mathbf{u}_p^t\|_2^2.
\end{equation}
When $C_t\Phi_t^{-1}\approx I$, this objective has small mixed action terms and is close to the separable surrogate $\frac{1}{2}\sum_i(e_{t,i}+\Delta u_{p,i}^t)^2$. This transforms the joint control problem into an approximately separable sum of local objectives, preconditioning the local quadratic control landscape (Appendix~\ref{app:proof}).

\subsection{Modular Actor-Critic Architecture}
To process observations and output actions, we employ a modular advantage actor-critic architecture~\cite{konda1999actor}. To reduce model complexity and enable zero-shot scalability to new arrays~\cite{neumann2022scaling}, parameters $\{\theta, \phi\}$ are shared across all agents of the same physical type~\cite{christianos2021scaling} (i.e., one parameter set for all plunger gates, and another for all barrier gates). 

The network relies on a convolutional encoder $f_{\mathrm{CNN}}(\cdot)$ to extract spatial hierarchies from the CSDs in which gate $i$ participates, denoted as $\mathbf{X}_i^t$. This feature representation is concatenated with a learned linear projection $p(\cdot)$ of the gate's current voltage, such that $o_i^t = f_{\mathrm{CNN}}(\mathbf{X}_i^t) || p(v_i^t)$. For plungers at the edge of the array that participate in only one CSD, the single available scan is duplicated across channels. For inner plungers, the two participating CSDs are stacked along the channel dimension. Barrier gates participate only in CSDs acquired by sweeping the gate's two immediate neighboring plungers, as barrier-induced cross-talk on next-nearest dots is near zero~\cite{rao2025modular}. The network then branches:
\begin{itemize}
    \item \textbf{Actor Head:} Outputs a probability distribution over continuous voltage adjustments via a Gaussian policy $\pi_\theta(a_i^t \mid o_i^t)$.
    \item \textbf{Critic Head:} Produces an estimate of the value function $V_\phi(o_i^t)$ based strictly on local observations, preserving decentralized execution.
\end{itemize}

\subsection{Training Pipeline}
We train QADAPT on a 4-dot system using a modified Proximal Policy Optimization (PPO) method for stable and sample-efficient learning in continuous control tasks~\cite{ppo}. At each training iteration, we roll out multiple parallel episodes using the current policy $\pi_\theta$. Because of our parameter-sharing scheme, experiences from all agents of the same type are aggregated into a single batch. This effectively multiplies our batch size by the number of agents, significantly stabilizing updates.

Because the optimal policy at any given step is always to move the system closer to the target configuration, future rewards offer little additional guidance~\cite{bandit}. Therefore, we model the task as a contextual bandit, eliminating temporal credit assignment (discount factor $\gamma = 0$). Each step is treated independently with dense, immediate feedback, which simplifies the learning problem and handles any residual non-stationarity of the multi-agent environment (supported by our ablation in Table~\ref{tab:ablation}).

After collecting a batch of trajectories, per-agent advantage estimates $\hat{A}_i^t$ are computed, which, without temporal credit assignment, reduces to immediate reward residuals:
\begin{equation}
    \hat{A}_i^t = r_i^t - V_\phi(o_i^t)
\end{equation}

To update the shared policy parameters $\theta$, local advantages are aggregated and the joint PPO clipped surrogate objective is maximized across all agents $i$ and batch time steps $t$:
\begin{equation}
    \mathcal{L}^{\mathrm{CLIP}}(\theta) = \hat{\mathbb{E}}_{t, i} \left[ \min\left( \kappa_i^t(\theta) \hat{A}_i^t, \mathrm{clip}(\kappa_i^t(\theta), 1-\epsilon, 1+\epsilon)\hat{A}_i^t \right) \right]
\end{equation}
where $\kappa_i^t(\theta) = \frac{\pi_\theta(a_i^t \mid o_i^t)}{\pi_{\theta_{\mathrm{old}}}(a_i^t \mid o_i^t)}$ is the probability ratio.

Concurrently, the shared critic parameters $\phi$ are updated by minimizing the aggregated mean squared error between the predicted values and the actual returns (which here equal the immediate rewards $r_i^t$):
\begin{equation}
    \mathcal{L}^{\mathrm{VF}}(\phi) = \hat{\mathbb{E}}_{t, i} \left[ \left( V_\phi(o_i^t) - r_i^t \right)^2 \right]
\end{equation}

By computing these expectations over the joint experience buffer of all homogeneous agents, gradients become averaged across the array. This decentralized PPO algorithm is thus an instance of centralized training with decentralized execution~\cite{xu2018experience}.

\begin{figure}[ht]
  \vskip 0.2in
  \begin{center}
    \centerline{\includegraphics[width=\columnwidth]{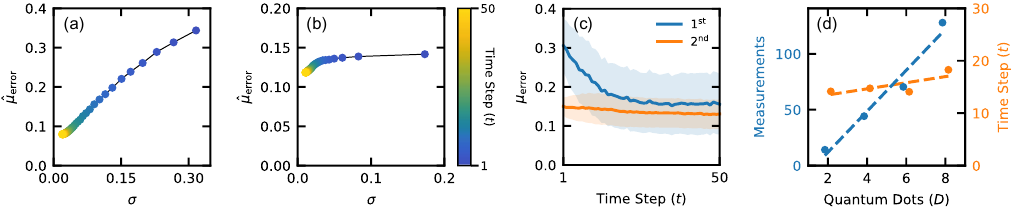}}
    \caption{\textbf{Gate Virtualization and Scaling Performance of QADAPT}. \textbf{a)} Kalman filter's posterior uncertainty (variance) $\sigma$ against absolute estimation error $\hat{\mu}_{\mathrm{error}}$ for nearest neighbor gate-to-dot cross-capacitances. Each marker represents one time step in a 4-dot system (3 CSD measurements), averaged over 1000 episodes. The cross-capacitance has a fixed ground truth value of 0.7. A top-right to bottom-left trajectory demonstrates that capacitance estimates improve over time, and that the filter's uncertainty tracks this improvement. \textbf{b)} same as a), but for second-nearest neighbor gate-to-dot cross-capacitances, with ground-truth values set to 0.3. The trajectory here is weaker but stable, as there is a weaker cross-talk signal from further separated dots. \textbf{c)} Raw prediction errors of $f_\mathrm{CNN}(\cdot)$ over time for both first- and second-nearest gate-to-dot couplings. \textbf{d)} Zero-shot scaling of CSD measurement acquisitions and time steps to reach convergence in 2, 4, 6, and 8 dot systems.}
    \label{Fig2}
  \end{center}
\end{figure}

\begin{figure*}[ht]
  \begin{center}    
  \centerline{\includegraphics[width=\textwidth]{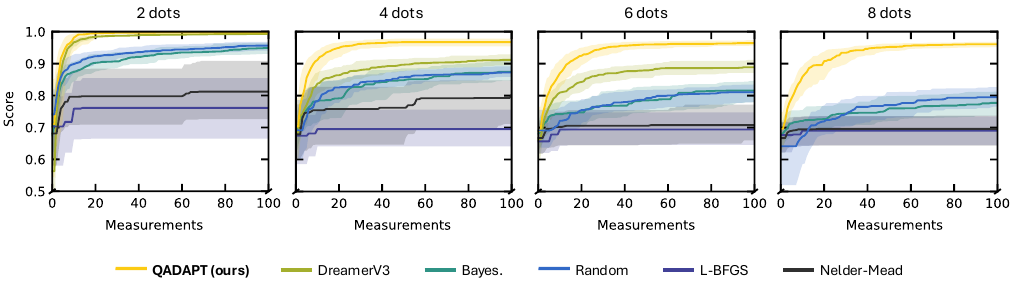}}
    \caption{\textbf{Domain-specific Benchmarks with Array Size}. As the length of the quantum dot array increases from 2 to 8 (moving left to right), QADAPT maintains rapid convergence, out-performing all benchmark methods, including: model-based reinforcement learning (DreamerV3), Bayesian optimisation using Gaussian processes, the gradient free Nelder-Mead optimiser, the gradient-based Limited-memory Broyden-Fletcher-Goldfarb-Shanno algorithm (L-BFGS), and a random search. The score reflects normalized distance to the target voltage configuration across all gates, and is plotted using the cumulative minimum distance at each scan step. Lines represent averages over 100 runs, with an envelope of half a standard deviation. Initial and target voltage configurations are randomized in each run. DreamerV3 is omitted from the 8-dot benchmarks due to compute resource constraints.}
    \label{Fig3}
  \end{center}
\end{figure*}

\section{Experiments}
\label{sec:experiments}

To highlight the benefits of our deep multi-agent approach, we evaluate QADAPT against several strong baseline methods (Fig.~\ref{Fig2}). To represent approaches commonly found in the literature of low-dimensional quantum dot tuning, we first consider several gradient-free optimizers: random search, Nelder--Mead~\cite{nelder1965simplex}, Bayesian optimization with Gaussian processes~\cite{rasmussen2006gpml, shahriari2016taking}, and L-BFGS~\cite{nocedal1980lbfgs} using finite-difference gradient estimates. To represent model-based reinforcement learning, we employ DreamerV3~\cite{dreamer}, adapting its latent world model to plan actions for the tuning task. This comparison probes whether explicit dynamics modeling improves sample efficiency in our measurement-constrained regime.

To identify components responsible for QADAPT’s performance, we conduct several further baseline comparisons and ablations (Table~\ref{tab:ablation}). Firstly, we compare $f_\mathrm{CNN}(\cdot)$ of our actor-critic architecture with three alternative backbones: (i) a heavier ResNet-style encoder (``Nature CNN'')~\cite{he2016resnet,tsuzuki2026cnn}; (ii) an LSTM-based recurrent policy~\cite{hochreiter1997lstm}; and (iii) a vanilla Transformer \cite{vaswani2017attention}. To isolate the contribution of our action space factorization, we ablate the virtualization model in addition to a variant where only the Kalman filtering is disabled. Without virtualization, QADAPT becomes identical to IPPO \cite{de2020independent}. We compare this against MAPPO \cite{yu2022surprising} (the critic receives a concatenation of all agents' observations), allowing us to discern whether giving the critic access to global information during training can overcome the absence of virtualization to coordinate agents. We also include MADDPG \cite{lowe2017multi} and FACMAC \cite{peng2021facmac} as off-policy baselines, probing the benefits of past experiences (in the form of replay buffers) toward agent coordination. Finally, we evaluate the contribution of QADAPT's multi-agent framework using a single-agent PPO baseline. In all comparisons, we report both convergence and steps-to-convergence to disentangle reliability from efficiency.

\begin{table*}[ht]
\caption{\textbf{Ablation Study and MARL-specific Baseline Comparisons}. We report the percentage of runs converging within 100 time steps, and the average number of steps to convergence, $\mu_\mathrm{steps}$, for those runs in a 4-dot array. Each row averages over 100 tuning trials. Convergence is defined as the fraction of runs that terminate with all agents within a threshold radius of their target voltage. Radii of 2\%, 5\%, and 10\% of each agent's total tuning range are reported (see Appendix~\ref{app:params} for more details). $\mu_\mathrm{steps}$ is calculated only over converged runs. Early stopping is used during training and the number of training epochs is shown for each variant}
\label{tab:ablation}
\centering
\small
\setlength{\tabcolsep}{6pt}
\begin{tabular}{lccc}
\toprule
Variant & Convergence Rate (\%) $\uparrow$ & $\mu_{\mathrm{steps}}$ $\downarrow$ & Epochs \\
\midrule
\textbf{QADAPT (ours)} & \textbf{93 / 95 / 95} & \textbf{20.5 / 16.7 / 13.2} & 150 \\
\midrule
w/ Nature CNN \cite{he2016resnet, tsuzuki2026cnn} & 91 / 98 / 100 & 20.1 / 17.8 / 14.6 & 250 \\
w/ LSTM \cite{hochreiter1997lstm} & 73 / 85 / 94 & 22.1 / 17.3 / 12.7 & 275 \\
w/ Transformer \cite{vaswani2017attention} & 3 / 26 / 61 & 33.7 / 29.2 / 27.1 & 150 \\
w/o Kalman filter & 10 / 55 / 89 & 26.9 / 24.2 / 18.1 & 100 \\
$\gamma = 0.9$ & 89 / 94 / 97 & 19.5 / 16.3 / 13.3 & 150 \\
\midrule
IPPO \cite{de2020independent} & 63 / 84 / 98 & 28.0 / 22.5 / 15.2 & 275 \\
MAPPO \cite{yu2022surprising} & 78 / 90 / 95 & 26.1 / 19.2 / 14.2 & 400 \\
MADDPG \cite{lowe2017multi} & 1 / 2 / 5 & 19.0 / 20.5 / 18.8 & 500k$^*$ \\
FACMAC \cite{peng2021facmac} & 0 / 0 / 1 & N/A / N/A / 1 & 500k$^*$ \\
Single Agent PPO & 0 / 1 / 6 & N/A / 43.0 / 26.2 & 150 \\
\bottomrule
\multicolumn{4}{l}{\footnotesize{$^*$For off-policy methods, the number of environment steps is reported.}}
\end{tabular}
\end{table*}

\section{Results and Discussion}
Across all evaluated array sizes and randomized device parameter settings, QADAPT consistently outperforms the learning-based and domain-specific black-box baselines described in Section~\ref{sec:experiments}. In particular, QADAPT achieves the highest convergence rates within a fixed measurement budget while also requiring fewer measurement cycles to reach the target configuration when it converges. 

In Fig.~\ref{Fig2}(a-b), we first illustrate the performance of our virtualization model in the baseline 4-dot system. As tuning progresses, a trend emerges in which the mean error in predicted gate-to-dot cross capacitance decreases, along with the variance in those predictions. This correlation is expected, because the confidence in $\Phi$ -- derived from the Kalman filter's uncertainty signal -- should increase as prediction errors decrease (proof provided in Appendix~\ref{app:proof}). A high density of predictions with small errors, occurring beyond $t\approx20$, is comparable to QADAPT's mean steps to convergence (see Fig.~\ref{Fig3}a), indicating a constructive feedback loop between $\Phi$ and the MDP. Overall, this supports our claim that a factored action space improves agent coordination and therefore accelerates quantum dot array tuning. 

This is further evidenced by our main model for comparison, IPPO, whose performance is shown in Table~\ref{tab:ablation}: removing virtualization leads to a marked drop in convergence and requires more steps to converge. This is consistent with the intuition that cross-talk produces strong action interference in the raw voltage basis, effectively turning local gate voltage updates into global perturbations. Ablating the Kalman filter alone also harms performance, which verifies that the filtering acts as a good regularizer over capacitance estimates, throttling sudden capacitance changes that would otherwise destabilize tuning. This may occur, for example, near target configurations where transition lines in CSDs are sparse. We note that capacitance predictions can be negative, which would correspond to an overcompensation by nearby plunger gate voltages when gate $i$ is updated. We explicitly provide such examples when training $f_{\mathrm{virt}}(\cdot)$ to enable $\Phi$ to self-correct online.

In Fig.~\ref{Fig3}, we present the evolution of the tuning score with the number of simulated measurements (CSD acquisitions). Because data acquisition is typically the dominant wall-clock cost in experiments, we report performance as a function of the number of measured CSDs, rather than gradient steps or environment transitions. The score is defined as $1-\sum_{i=1}^N\tilde{v_i}/N$, where $\tilde{v_i}$ and is the `best-so-far' distance to the target voltage $v_i^*$, normalized by the voltage bounds of the tuning environment. We observe the largest performance margins between QADAPT and traditional L-BFGS and Nelder-Mead optimizers, likely due to the non-convex optimization landscape, which causes the algorithms to become trapped in local minima. For this reason, we are unsurprised to find a random search to be more efficient, and showing higher mean scores. Using RL-based approaches avoids spending many evaluations exploring uninformative directions in the high-dimensional voltage space.

To quantify scalability, we evaluate the learned policy from the 4-dot system on arrays ranging from 2 to 8 dots. In particular, we find that QADAPT’s parameter-sharing strategy across agents of the same type (barrier and plunger) mitigates the combinatorial growth in joint action space that limits centralized baselines, such as DreamerV3. In practice, this manifests as a comparatively shallow degradation in QADAPT's convergence rate as the number of dots increases, which is especially important given the constrained measurement budget of quantum dot tuning. This is exemplified in Fig.~\ref{Fig2}d, where we show QADAPT's near-linear scaling $\mathcal{O}(N)$ in the number of measurements required to reach a mean tuning score of 0.95, which translates to a near constant number of time steps to reach convergence. Another popular domain-specific method, Bayesian optimization, suffers greatly in this context, showing a catastrophic slow-down in convergence beyond 4 dots (see Fig.~\ref{Fig3}). This poses an additional overhead compared to other learning-based and gradient-free methods due to the cubic complexity with observations that Gaussian process surrogate models experience.

MADDPG and FACMAC both fail to converge under our measurement budget (Table 1). We observed actor gradient collapse in MADDPG and critic gradient explosion in FACMAC, suggesting that off-policy replay and centralized value estimation are not in themselves sufficient to stabilize learning when agents act in a strongly coupled action basis. For the single agent variant of our PPO method, we find zero convergence for our strictest convergence threshold criterion. We attribute this to a noisy reward signal stemming from an inability to accredit advantage estimates to specific control dimensions during training. Taken together, we conclude that our proposed factorization of the action space, enabling effective parameter sharing and decentralized execution, is the primary driver of both tuning efficiency and reliability.

In the ablations of our actor-critic architecture, we find that this mainly trades computation for modest changes in steps-to-convergence. The policy is not strongly bottlenecked by visual feature capacity once the essential features in CSDs are captured. Hence, our encoder proves to be effective as a lightweight network as compared to that of larger attention based or recurrent neural networks. 

\subsection{Scalability}
The efficient zero-shot scaling to various array sizes demonstrates how QADAPT's modular training architecture is amenable to adding agents at test time, circumventing credit-assignment, long-horizon planning that potentially compounds errors over imagined roll-outs, and optimization complexity with the growth of a joint action space. 

Thanks to our action space factorization and shared-policy formulation, a QADAPT policy trained on an array with $N$ dots can be directly deployed to an array with $N'$ dots (using $2N'-1$ agents) without architectural changes or retraining. This inherently makes QADAPT parallelizable: by exploiting the locality of each agent, all agents can operate concurrently on their respective part of the tuning task without interference, both during training and execution. This is not the case for single-agent models, which experience an exponential growth in the joint state-action space grows exponentially with the addition of more control variables. For example, training DreamerV3 on 2/4/6 dot-arrays in the benchmark comparison of Fig.~\ref{Fig2} required 2/1/0.5 million training rollouts.

Transferability to various device sizes is a key limitation of other baseline MARL approaches. In FACMAC, despite a factored critic as well as mixing networks applied on agent utilities, policy gradients and the semantics of the value function are still tied to the agent cardinality deployed during training. Similarly in MAPPO, transferability is not guaranteed due to the centralized critic using a concatenation of all agent observations, which is unique to the training environment. We point out that in more scalable graph-based communication strategies, such as InforMARL, agents still only control single entities. In our context, this would equate to each agent acting on a single voltage parameter $\Delta v_p^t$, and the learning of virtualized voltage adjustments $\Delta u_p^t$ would have to take place implicitly. In QADAPT, this becomes entirely off-loaded to our lightweight $f_\mathrm{virt}(\cdot)$, giving more capacity to the actor-critic network for learning optimal actions, reducing training overheads. 

\subsection{Generalizability}
We consider how QADAPT would generalize to two- and three-dimensional arrays, where quantum dots may have more than two nearest neighbors. In this case, QADAPT scales trivially: each gate agent will simply participate in more CSDs and the cross-capacitance matrix $\Phi$ will become more dense, owing to an increased number of nearest- and second-nearest dot neighbors. While this would require re-training $f_{\mathrm{virt}}(\cdot)$, all components of the QADAPT framework would remain unchanged and the algorithm becomes backward compatible. Compute memory can be easily sized according to resource availability and the resolution of CSD images can be adjusted to accommodate faster measurement cycles.

Here, we redirect the reader Appendix~\ref{app:super} where we apply the QADAPT framework to a completely different setting -- the tuning of superconducting qubits in the presence of drive cross-talk. This shares many similarities to the quantum dot array tuning task: each qubit has configurable parameters and the adjustment of any one parameter influences the performance of nearby qubits. With minor modifications to input and output dimensions, we show that an action space factorization and paramater sharing strategy is also highly suitable for this quantum control problem.

\subsection{Limitations}
While QADAPT generalizes well to realistic experimentally simulated data (see Appendix~\ref{app:params}), the sim-2-real gap still remains unexplored. Real hardware can experience parameter drift, hysteresis, and time-varying voltage offsets. While the bandit reduction avoids long-horizon planning that would be destabilized by such effects, they may nonetheless hamper the online learning of $\Phi$ if cross-capacitance values vary significantly between tuning steps. This can be mitigated by increasing the Kalman filter's process noise, which relaxes posterior confidence and permits larger matrix updates, though potentially at the cost of additional steps to convergence.

A further limitation of QADAPT may stem from the strict assignment of agent types among barrier and plunger categories. In simulation, these model distinctly different behaviors, however, fabrication imperfections and gate dimensions may blur their respective roles on real hardware. Whether this function-based parameter sharing strategy degrades gracefully when physical symmetries break remains to be validated. This could be mitigated, for instance, using parameter sharing masks \cite{li2024kaleidoscope}, which could be selected dynamically based on cross-capacitance behaviors.

Finally, we note that simulated devices are assumed to operate in an `open' regime, whereby each quantum dot is capable of acquiring charges from a nearby reservoir. While such device architectures are standard, future devices may operate in an `isolated' regime \cite{bertrand2015quantum}. This introduces non-steady state solutions and modifies charge stability diagram (CSD) features, which may reduce the effectiveness of our non-temporal credit assignment.

\section{Conclusion}
We introduced QADAPT, a multi-agent reinforcement learning framework for the automated tuning of quantum dot arrays that combines (i) parameter sharing across gate-local agents, and (ii) an online factorization of the control space, refined via Kalman-guided cross-capacitance estimation. This formulation enables scalable and decentralized control while preserving coordinated optimization in the tuning of strongly coupled quantum devices. Across simulated quantum dot arrays of increasing size, QADAPT consistently outperforms state-of-the-art RL baselines and black-box optimizers, achieving higher convergence rates under fixed measurement budgets and requiring fewer measurements to reach target configurations. Crucially, its modular design supports zero-shot transfer.

These results suggest that action-factored MARL is a practical and scalable paradigm for quantum device calibration, addressing a key bottleneck in the deployment of large-scale quantum dot-based processors. Future work will focus on hardware-in-the-loop validation and robustness to experimental non-idealities.

\begin{ack}
The authors would like to thank Marios Samiotis for helpful discussions. C.C. acknowledges support from the UKRI Doctoral Training Partnership related to EP/W524311/1 (project ref. 2887634). N.A. acknowledges support from the European Research Council (Grant Agreement No. 948932) and the Royal Society (URF-R1-191150). The views and opinions expressed are those of the authors only and do not necessarily reflect those of the European Union, Research Executive Agency or UK Research \& Innovation. Neither the European Union nor UK Research \& Innovation can be held responsible for them. P.V. acknowledges supported from the United States Army Research Office (Award No. W911NF-21-S-0009-2). The views and conclusions contained herein are those of the authors and should not be interpreted as necessarily representing the official policies or endorsements, either expressed or implied, of the Army Research Office, or the U.S. Government. This project is co-funded by the European Union and UK Research \& Innovation (Quantum Flagship project
ASPECTS, Grant Agreement No. 101080167).
\end{ack}

\medskip

{
\small

\bibliography{refs}

@book{sutton2018reinforcement,
  title     = {Reinforcement Learning: An Introduction},
  author    = {Sutton, Richard S. and Barto, Andrew G.},
  year      = {2018},
  edition   = {2nd},
  publisher = {MIT Press},
  address   = {Cambridge, MA}
}

@article{bernstein2002complexity,
  title={The complexity of decentralized control of Markov decision processes},
  author={Bernstein, Daniel S and Givan, Robert and Immerman, Neil and Zilberstein, Shlomo},
  journal={Mathematics of operations research},
  volume={27},
  number={4},
  pages={819--840},
  year={2002},
  publisher={INFORMS}
}

@article{mnih2015human,
  title={Human-level control through deep reinforcement learning},
  author={Mnih, Volodymyr and Kavukcuoglu, Koray and Silver, David and Rusu, Andrei A and Veness, Joel and Bellemare, Marc G and Graves, Alex and Riedmiller, Martin and Fidjeland, Andreas K and Ostrovski, Georg and others},
  journal={nature},
  volume={518},
  number={7540},
  pages={529--533},
  year={2015},
  publisher={Nature Publishing Group}
}

@article{baum2021experimental,
  title={Experimental deep reinforcement learning for error-robust gate-set design on a superconducting quantum computer},
  author={Baum, Yuval and Amico, Mirko and Howell, Sean and Hush, Michael and Liuzzi, Maggie and Mundada, Pranav and Merkh, Thomas and Carvalho, Andre RR and Biercuk, Michael J},
  journal={PRX quantum},
  volume={2},
  number={4},
  pages={040324},
  year={2021},
  publisher={APS}
}

@inproceedings{semola2022deep,
  title={Deep reinforcement learning quantum control on ibmq platforms and qiskit pulse},
  author={Semola, Rudy and Moro, Lorenzo and Bacciu, Davide and Prati, Enrico},
  booktitle={2022 ieee international conference on quantum computing and engineering (qce)},
  pages={759--762},
  year={2022},
  organization={IEEE}
}

@article{reuer2023realizing,
  title={Realizing a deep reinforcement learning agent for real-time quantum feedback},
  author={Reuer, Kevin and Landgraf, Jonas and F{\"o}sel, Thomas and O’Sullivan, James and Beltr{\'a}n, Liberto and Akin, Abdulkadir and Norris, Graham J and Remm, Ants and Kerschbaum, Michael and Besse, Jean-Claude and others},
  journal={Nature Communications},
  volume={14},
  number={1},
  pages={7138},
  year={2023},
  publisher={Nature Publishing Group UK London}
}

@article{nguyen2021deep,
  title={Deep reinforcement learning for efficient measurement of quantum devices},
  author={Nguyen, V and Orbell, SB and Lennon, Dominic T and Moon, Hyungil and Vigneau, Florian and Camenzind, Leon C and Yu, Liuqi and Zumb{\"u}hl, Dominik M and Briggs, G Andrew D and Osborne, Michael A and others},
  journal={npj Quantum Information},
  volume={7},
  number={1},
  pages={100},
  year={2021},
  publisher={Nature Publishing Group UK London}
}

@article{cao2026qcaleval,
  title={QCalEval: Benchmarking Vision-Language Models for Quantum Calibration Plot Understanding},
  author={Cao, Shuxiang and Zhang, Zijian and Agarwal, Abhishek and Bratrud, Grace and Beysengulov, Niyaz R and Cole, Daniel C and Frieiro, Alejandro G{\'o}mez and Glen, Elena O and Hsu, Hao and Huang, Gang and others},
  journal={arXiv preprint arXiv:2604.25884},
  year={2026}
}

@article{ares2021machine,
  title={Machine learning as an enabler of qubit scalability},
  author={Ares, Natalia},
  journal={Nature Reviews Materials},
  volume={6},
  number={10},
  pages={870--871},
  year={2021},
  publisher={Nature Publishing Group UK London}
}

@article{zwolak2024data,
  title={Data needs and challenges for quantum dot devices automation},
  author={Zwolak, Justyna P and Taylor, Jacob M and Andrews, Reed W and Benson, Jared and Bryant, Garnett W and Buterakos, Donovan and Chatterjee, Anasua and Das Sarma, Sankar and Eriksson, Mark A and Greplov{\'a}, Eli{\v{s}}ka and others},
  journal={npj Quantum Information},
  volume={10},
  number={1},
  pages={105},
  year={2024},
  publisher={Nature Publishing Group UK London}
}

@article{kondo2025environment,
  title={Environment model construction toward auto-tuning of quantum dot devices based on model-based reinforcement learning},
  author={Kondo, Chihiro and Mizokuchi, Raisei and Yoneda, Jun and Kodera, Tetsuo},
  journal={APL Machine Learning},
  volume={3},
  number={1},
  year={2025},
  publisher={AIP Publishing}
}

@article{chatterjee2021semiconductor,
  title={Semiconductor qubits in practice},
  author={Chatterjee, Anasua and Stevenson, Paul and De Franceschi, Silvano and Morello, Andrea and de Leon, Nathalie P and Kuemmeth, Ferdinand},
  journal={Nature Reviews Physics},
  volume={3},
  number={3},
  pages={157--177},
  year={2021},
  publisher={Nature Publishing Group UK London}
}

@article{bartee2025spin,
  title={Spin-qubit control with a milli-kelvin CMOS chip},
  author={Bartee, Samuel K and Gilbert, Will and Zuo, Kun and Das, Kushal and Tanttu, Tuomo and Yang, Chih Hwan and Dumoulin Stuyck, Nard and Pauka, Sebastian J and Su, Rocky Y and Lim, Wee Han and others},
  journal={Nature},
  pages={1--6},
  year={2025},
  publisher={Nature Publishing Group UK London}
}

@article{steinacker2025industry,
  title={Industry-compatible silicon spin-qubit unit cells exceeding 99\% fidelity},
  author={Steinacker, Paul and Dumoulin Stuyck, Nard and Lim, Wee Han and Tanttu, Tuomo and Feng, MengKe and Serrano, Santiago and Nickl, Andreas and Candido, Marco and Cifuentes, Jesus D and Vahapoglu, Ensar and others},
  journal={Nature},
  pages={1--7},
  year={2025},
  publisher={Nature Publishing Group UK London}
}

@article{dijkema2026simultaneous,
  title={Simultaneous operation of an 18-qubit modular array in germanium},
  author={Dijkema, Jurgen J and Zhang, Xin and Bardakas, Achilleas and Bouman, Daniel and Cuzzocrea, Alice and van Driel, David and Girardi, Davide and Stehouwer, Lucas EA and Scappucci, Giordano and Zwerver, Anne-Marije J and others},
  journal={arXiv preprint arXiv:2604.01063},
  year={2026}
}

@article{borsoi2024shared,
  title={Shared control of a 16 semiconductor quantum dot crossbar array},
  author={Borsoi, Francesco and Hendrickx, Nico W and John, Valentin and Meyer, Marcel and Motz, Sayr and Van Riggelen, Floor and Sammak, Amir and De Snoo, Sander L and Scappucci, Giordano and Veldhorst, Menno},
  journal={Nature Nanotechnology},
  volume={19},
  number={1},
  pages={21--27},
  year={2024},
  publisher={Nature Publishing Group UK London}
}

@article{volk2019loading,
  title={Loading a quantum-dot based “Qubyte” register},
  author={Volk, Christian and Zwerver, Anne-Marije J and Mukhopadhyay, Uditendu and Eendebak, Pieter T and van Diepen, Cornelis Jacobus and Dehollain, Juan Pablo and Hensgens, Toivo and Fujita, Takafumi and Reichl, Christian and Wegscheider, Werner and others},
  journal={npj Quantum Information},
  volume={5},
  number={1},
  pages={29},
  year={2019},
  publisher={Nature Publishing Group UK London}
}

@article{tsuzuki2026cnn,
  title={CNN-assisted automatic cross-capacitance matrix update for virtual-gate control of quantum dot arrays},
  author={Tsuzuki, Tatsuo and Yuta, Hideaki and Muto, Yui and Ludwig, Arne and Wieck, Andreas Dirk and Oiwa, Akira and Fujita, Takafumi},
  journal={Japanese Journal of Applied Physics},
  volume={65},
  number={1},
  pages={01SP03},
  year={2026},
  publisher={IOP Publishing}
}

@article{rao2025modular,
  title={Modular autonomous virtualization system for two-dimensional semiconductor quantum dot arrays},
  author={Rao, Anantha S and Buterakos, Donovan and van Straaten, Barnaby and John, Valentin and Yu, C{\'e}cile X and Oosterhout, Stefan D and Stehouwer, Lucas and Scappucci, Giordano and Veldhorst, Menno and Borsoi, Francesco and others},
  journal={Physical Review X},
  volume={15},
  number={2},
  pages={021034},
  year={2025},
  publisher={APS}
}

@article{muto2025automatic,
  title={Automatic detection of single-electron regime of quantum dots and definition of virtual gates using U-Net and clustering},
  author={Muto, Yui and Zielewski, Michael R and Shinozaki, Motoya and Noro, Kosuke and Otsuka, Tomohiro},
  journal={arXiv preprint arXiv:2501.05878},
  year={2025}
}

@article{hader2025automated,
  title={Automated Charge Transition Detection in Quantum Dot Charge Stability Diagrams},
  author={Hader, Fabian and Fuchs, Fabian and Fleitmann, Sarah and Havemann, Karin and Scherer, Benedikt and Vogelbruch, Jan and Geck, Lotte and Van Waasen, Stefan},
  journal={IEEE Transactions on Quantum Engineering},
  year={2025},
  publisher={IEEE}
}

@article{marchand2025end,
  title={End-to-End Analysis of Charge Stability Diagrams with Transformers},
  author={Marchand, Rahul and Schorling, Lucas and Carlsson, Cornelius and Schuff, Jonas and van Straaten, Barnaby and Patti, Taylor L and Fedele, Federico and Ziegler, Joshua and Girdhar, Parth and Vaidhyanathan, Pranav and others},
  journal={arXiv preprint arXiv:2508.15710},
  year={2025}
}

@article{oakes2020automatic,
  title={Automatic virtual voltage extraction of a 2x2 array of quantum dots with machine learning},
  author={Oakes, Giovanni A and Duan, Jingyu and Morton, John JL and Lee, Alpha and Smith, Charles G and Zalba, M},
  journal={arXiv preprint arXiv:2012.03685},
  year={2020}
}

@article{baart2016computer,
  title={Computer-automated tuning of semiconductor double quantum dots into the single-electron regime},
  author={Baart, Timothy A and Eendebak, Pieter T and Reichl, Christian and Wegscheider, Werner and Vandersypen, Lieven MK},
  journal={Applied Physics Letters},
  volume={108},
  number={21},
  year={2016},
  publisher={AIP Publishing}
}

@article{yon2025experimental,
  title={Experimental online quantum dots charge autotuning using neural networks},
  author={Yon, Victor and Galaup, Bastien and Rohrbacher, Claude and Rivard, Joffrey and Morel, Alexis and Leclerc, Dominic and Godfrin, Clement and Li, Ruoyu and Kubicek, Stefan and Greve, Kristiaan De and others},
  journal={Nano Letters},
  volume={25},
  number={10},
  pages={3717--3725},
  year={2025},
  publisher={ACS Publications}
}

@article{roux2025rapid,
  title={Rapid Autotuning of a SiGe Quantum Dot into the Single-Electron Regime with Machine Learning and RF-Reflectometry FPGA-Based Measurements},
  author={Roux, Marc-Antoine and Rivard, Joffrey and Yon, Victor and Morel, Alexis and Leclerc, Dominic and Rohrbacher, Claude and Ndiaye, El Bachir and Tafuri, Felice Francesco and Bono, Brendan and Kubicek, Stefan and others},
  journal={arXiv preprint arXiv:2509.19537},
  year={2025}
}

@article{kalantre2019machine,
  title={Machine learning techniques for state recognition and auto-tuning in quantum dots},
  author={Kalantre, Sandesh S and Zwolak, Justyna P and Ragole, Stephen and Wu, Xingyao and Zimmerman, Neil M and Stewart Jr, MD and Taylor, Jacob M},
  journal={npj Quantum Information},
  volume={5},
  number={1},
  pages={6},
  year={2019},
  publisher={Nature Publishing Group UK London}
}

@article{zwolak2020autotuning,
  title={Autotuning of double-dot devices in situ with machine learning},
  author={Zwolak, Justyna P and McJunkin, Thomas and Kalantre, Sandesh S and Dodson, JP and MacQuarrie, ER and Savage, DE and Lagally, MG and Coppersmith, SN and Eriksson, Mark A and Taylor, Jacob M},
  journal={Physical review applied},
  volume={13},
  number={3},
  pages={034075},
  year={2020},
  publisher={APS}
}

@article{ziegler2023tuning,
  title={Tuning arrays with rays: Physics-informed tuning of quantum dot charge states},
  author={Ziegler, Joshua and Luthi, Florian and Ramsey, Mick and Borjans, Felix and Zheng, Guoji and Zwolak, Justyna P},
  journal={Physical Review Applied},
  volume={20},
  number={3},
  pages={034067},
  year={2023},
  publisher={APS}
}

@article{czischek2021miniaturizing,
  title={Miniaturizing neural networks for charge state autotuning in quantum dots},
  author={Czischek, Stefanie and Yon, Victor and Genest, Marc-Antoine and Roux, Marc-Antoine and Rochette, Sophie and Lemyre, Julien Camirand and Moras, Mathieu and Pioro-Ladri{\`e}re, Michel and Drouin, Dominique and Beilliard, Yann and others},
  journal={Machine Learning: Science and Technology},
  volume={3},
  number={1},
  pages={015001},
  year={2021},
  publisher={IOP Publishing}
}

@article{durrer2020automated,
  title={Automated tuning of double quantum dots into specific charge states using neural networks},
  author={Durrer, Renato and Kratochwil, Benedikt and Koski, Jonne V and Landig, Andreas J and Reichl, Christian and Wegscheider, Werner and Ihn, Thomas and Greplova, Eliska},
  journal={Physical Review Applied},
  volume={13},
  number={5},
  pages={054019},
  year={2020},
  publisher={APS}
}

@article{moon2020machine,
  title={Machine learning enables completely automatic tuning of a quantum device faster than human experts},
  author={Moon, Hyungil and Lennon, Dominic T and Kirkpatrick, James and van Esbroeck, Nina M and Camenzind, Leon C and Yu, Liuqi and Vigneau, Florian and Zumb{\"u}hl, Dominik M and Briggs, G Andrew D and Osborne, Michael A and others},
  journal={Nature communications},
  volume={11},
  number={1},
  pages={4161},
  year={2020},
  publisher={Nature Publishing Group UK London}
}

@article{severin2024cross,
  title={Cross-architecture tuning of silicon and SiGe-based quantum devices using machine learning},
  author={Severin, Brandon and Lennon, Dominic T and Camenzind, Leon C and Vigneau, Florian and Fedele, Federico and Jirovec, Daniel and Ballabio, Andrea and Chrastina, Daniel and Isella, Giovanni and de Kruijf, Mathieu and others},
  journal={Scientific Reports},
  volume={14},
  number={1},
  pages={17281},
  year={2024},
  publisher={Nature Publishing Group UK London}
}

@article{schuff2026fully,
  title={Fully autonomous tuning of a spin qubit},
  author={Schuff, Jonas and Carballido, Miguel J and Kotzagiannidis, Madeleine and Calvo, Juan Carlos and Caselli, Marco and Rawling, Jacob and Craig, David L and van Straaten, Barnaby and Severin, Brandon and Fedele, Federico and others},
  journal={Nature Electronics},
  pages={1--10},
  year={2026},
  publisher={Nature Publishing Group UK London}
}

@article{van2025all,
  title={All-rf-based coarse-tuning algorithm for quantum devices using machine learning},
  author={Van Straaten, Barnaby and Fedele, Federico and Vigneau, Florian and Hickie, Joseph and Jirovec, Daniel and Ballabio, Andrea and Chrastina, Daniel and Isella, Giovanni and Katsaros, Georgios and Ares, Natalia},
  journal={Physical Review Applied},
  volume={24},
  number={5},
  pages={054030},
  year={2025},
  publisher={APS}
}

@article{van2020quantum,
  title={Quantum device fine-tuning using unsupervised embedding learning},
  author={van Esbroeck, Nina M and Lennon, Dominic T and Moon, Hyungil and Nguyen, Vu and Vigneau, Florian and Camenzind, Leon C and Yu, Liuqi and Zumb{\"u}hl, Dominik M and Briggs, G Andrew D and Sejdinovic, Dino and others},
  journal={New Journal of Physics},
  volume={22},
  number={9},
  pages={095003},
  year={2020},
  publisher={IOP Publishing}
}

@article{van2024qarray,
  title={QArray: A GPU-accelerated constant capacitance model simulator for large quantum dot arrays},
  author={van Straaten, Barnaby and Hickie, Joseph and Schorling, Lucas and Schuff, Jonas and Fedele, Federico and Ares, Natalia},
  journal={SciPost Physics Codebases},
  pages={035},
  year={2024}
}

@article{van2024codebase,
  title={Codebase release 1.3 for QArray},
  author={van Straaten, Barnaby and Hickie, Joseph and Schorling, Lucas and Schuff, Jonas and Fedele, Federico and Ares, Natalia},
  journal={SciPost Physics Codebases},
  pages={035},
  year={2024}
}

@inproceedings{christianos2021scaling,
  title={Scaling multi-agent reinforcement learning with selective parameter sharing},
  author={Christianos, Filippos and Papoudakis, Georgios and Rahman, Muhammad A and Albrecht, Stefano V},
  booktitle={International Conference on Machine Learning},
  pages={1989--1998},
  year={2021},
  organization={PMLR}
}

@article{ppo,
  title={Proximal policy optimization algorithms},
  author={Schulman, John and Wolski, Filip and Dhariwal, Prafulla and Radford, Alec and Klimov, Oleg},
  journal={arXiv preprint arXiv:1707.06347},
  year={2017}
}

@article{neumann2022scaling,
  title={Scaling laws for a multi-agent reinforcement learning model},
  author={Neumann, Oren and Gros, Claudius},
  journal={arXiv preprint arXiv:2210.00849},
  year={2022}
}

@article{alexeev2025artificial,
  title={Artificial intelligence for quantum computing},
  author={Alexeev, Yuri and Farag, Marwa H and Patti, Taylor L and Wolf, Mark E and Ares, Natalia and Aspuru-Guzik, Al{\'a}n and Benjamin, Simon C and Cai, Zhenyu and Cao, Shuxiang and Chamberland, Christopher and others},
  journal={Nature Communications},
  volume={16},
  number={1},
  pages={10829},
  year={2025},
  publisher={Nature Publishing Group UK London}
}

@article{carlsson2025automated,
  title={Automated All-RF Tuning for Spin Qubit Readout and Control},
  author={Carlsson, Cornelius and Saez-Mollejo, Jaime and Fedele, Federico and Calcaterra, Stefano and Chrastina, Daniel and Isella, Giovanni and Katsaros, Georgios and Ares, Natalia},
  journal={arXiv preprint arXiv:2506.10834},
  year={2025}
}

@article{hu2020learning,
  title={Learning to utilize shaping rewards: A new approach of reward shaping},
  author={Hu, Yujing and Wang, Weixun and Jia, Hangtian and Wang, Yixiang and Chen, Yingfeng and Hao, Jianye and Wu, Feng and Fan, Changjie},
  journal={Advances in Neural Information Processing Systems},
  volume={33},
  pages={15931--15941},
  year={2020}
}

@article{vaswani2017attention,
  title={Attention is all you need},
  author={Vaswani, Ashish and Shazeer, Noam and Parmar, Niki and Uszkoreit, Jakob and Jones, Llion and Gomez, Aidan N and Kaiser, {\L}ukasz and Polosukhin, Illia},
  journal={Advances in neural information processing systems},
  volume={30},
  year={2017}
}

@inproceedings{gupta2017cooperative,
  title={Cooperative multi-agent control using deep reinforcement learning},
  author={Gupta, Jayesh K and Egorov, Maxim and Kochenderfer, Mykel},
  booktitle={International conference on autonomous agents and multiagent systems},
  pages={66--83},
  year={2017},
  organization={Springer}
}

@article{oroojlooy2023review,
  title={A review of cooperative multi-agent deep reinforcement learning},
  author={Oroojlooy, Afshin and Hajinezhad, Davood},
  journal={Applied Intelligence},
  volume={53},
  number={11},
  pages={13677--13722},
  year={2023},
  publisher={Springer}
}

@inproceedings{tan1993multi,
  title={Multi-agent reinforcement learning: Independent vs. cooperative agents},
  author={Tan, Ming and others},
  booktitle={Proceedings of the tenth international conference on machine learning},
  pages={330--337},
  year={1993}
}

@article{sunehag2017value,
  title={Value-decomposition networks for cooperative multi-agent learning},
  author={Sunehag, Peter and Lever, Guy and Gruslys, Audrunas and Czarnecki, Wojciech Marian and Zambaldi, Vinicius and Jaderberg, Max and Lanctot, Marc and Sonnerat, Nicolas and Leibo, Joel Z and Tuyls, Karl and others},
  journal={arXiv preprint arXiv:1706.05296},
  year={2017}
}

@article{rashid2020monotonic,
  title={Monotonic value function factorisation for deep multi-agent reinforcement learning},
  author={Rashid, Tabish and Samvelyan, Mikayel and De Witt, Christian Schroeder and Farquhar, Gregory and Foerster, Jakob and Whiteson, Shimon},
  journal={Journal of Machine Learning Research},
  volume={21},
  number={178},
  pages={1--51},
  year={2020}
}

@inproceedings{son2019qtran,
  title={Qtran: Learning to factorize with transformation for cooperative multi-agent reinforcement learning},
  author={Son, Kyunghwan and Kim, Daewoo and Kang, Wan Ju and Hostallero, David Earl and Yi, Yung},
  booktitle={International conference on machine learning},
  pages={5887--5896},
  year={2019},
  organization={PMLR}
}

@article{de2020deep,
  title={Deep multi-agent reinforcement learning for decentralized continuous cooperative control},
  author={de Witt, Christian Schroeder and Peng, Bei and Kamienny, Pierre-Alexandre and Torr, Philip and B{\"o}hmer, Wendelin and Whiteson, Shimon},
  journal={arXiv preprint arXiv:2003.06709},
  volume={19},
  year={2020}
}

@inproceedings{foerster2018counterfactual,
  title={Counterfactual multi-agent policy gradients},
  author={Foerster, Jakob and Farquhar, Gregory and Afouras, Triantafyllos and Nardelli, Nantas and Whiteson, Shimon},
  booktitle={Proceedings of the AAAI conference on artificial intelligence},
  volume={32},
  number={1},
  year={2018}
}

@article{lowe2017multi,
  title={Multi-agent actor-critic for mixed cooperative-competitive environments},
  author={Lowe, Ryan and Wu, Yi I and Tamar, Aviv and Harb, Jean and Pieter Abbeel, OpenAI and Mordatch, Igor},
  journal={Advances in neural information processing systems},
  volume={30},
  year={2017}
}

@article{peng2021facmac,
  title={Facmac: Factored multi-agent centralised policy gradients},
  author={Peng, Bei and Rashid, Tabish and Schroeder de Witt, Christian and Kamienny, Pierre-Alexandre and Torr, Philip and B{\"o}hmer, Wendelin and Whiteson, Shimon},
  journal={Advances in neural information processing systems},
  volume={34},
  pages={12208--12221},
  year={2021}
}

@article{sukhbaatar2016learning,
  title={Learning multiagent communication with backpropagation},
  author={Sukhbaatar, Sainbayar and Fergus, Rob and others},
  journal={Advances in neural information processing systems},
  volume={29},
  year={2016}
}

@inproceedings{khan2020graph,
  title={Graph policy gradients for large scale robot control},
  author={Khan, Arbaaz and Tolstaya, Ekaterina and Ribeiro, Alejandro and Kumar, Vijay},
  booktitle={Conference on robot learning},
  pages={823--834},
  year={2020},
  organization={PMLR}
}

@inproceedings{nayak2023scalable,
  title={Scalable multi-agent reinforcement learning through intelligent information aggregation},
  author={Nayak, Siddharth and Choi, Kenneth and Ding, Wenqi and Dolan, Sydney and Gopalakrishnan, Karthik and Balakrishnan, Hamsa},
  booktitle={International conference on machine learning},
  pages={25817--25833},
  year={2023},
  organization={PMLR}
}

@article{de2020independent,
  title={Is independent learning all you need in the starcraft multi-agent challenge?},
  author={De Witt, Christian Schroeder and Gupta, Tarun and Makoviichuk, Denys and Makoviychuk, Viktor and Torr, Philip HS and Sun, Mingfei and Whiteson, Shimon},
  journal={arXiv preprint arXiv:2011.09533},
  year={2020}
}

@article{yu2022surprising,
  title={The surprising effectiveness of ppo in cooperative multi-agent games},
  author={Yu, Chao and Velu, Akash and Vinitsky, Eugene and Gao, Jiaxuan and Wang, Yu and Bayen, Alexandre and Wu, Yi},
  journal={Advances in neural information processing systems},
  volume={35},
  pages={24611--24624},
  year={2022}
}

@article{kuba2021trust,
  title={Trust region policy optimisation in multi-agent reinforcement learning},
  author={Kuba, Jakub Grudzien and Chen, Ruiqing and Wen, Muning and Wen, Ying and Sun, Fanglei and Wang, Jun and Yang, Yaodong},
  journal={arXiv preprint arXiv:2109.11251},
  year={2021}
}

@article{zhong2024heterogeneous,
  title={Heterogeneous-agent reinforcement learning},
  author={Zhong, Yifan and Kuba, Jakub Grudzien and Feng, Xidong and Hu, Siyi and Ji, Jiaming and Yang, Yaodong},
  journal={Journal of Machine Learning Research},
  volume={25},
  number={32},
  pages={1--67},
  year={2024}
}

@article{tessera2024hypermarl,
  title={Hypermarl: Adaptive hypernetworks for multi-agent rl},
  author={Tessera, Kale-ab Abebe and Rahman, Arrasy and Storkey, Amos and Albrecht, Stefano V},
  journal={arXiv preprint arXiv:2412.04233},
  year={2024}
}

@article{kilinc2018multi,
  title={Multi-agent deep reinforcement learning with extremely noisy observations},
  author={Kilinc, Ozsel and Montana, Giovanni},
  journal={arXiv preprint arXiv:1812.00922},
  year={2018}
}

@inproceedings{feng2024hierarchical,
  title={Hierarchical consensus-based multi-agent reinforcement learning for multi-robot cooperation tasks},
  author={Feng, Pu and Liang, Junkang and Wang, Size and Yu, Xin and Ji, Xin and Chen, Yiting and Zhang, Kui and Shi, Rongye and Wu, Wenjun},
  booktitle={2024 IEEE/RSJ International Conference on Intelligent Robots and Systems (IROS)},
  pages={642--649},
  year={2024},
  organization={IEEE}
}

@article{zhao2026bridging,
  title={Bridging MARL to SARL: An Order-Independent Multi-Agent Transformer via Latent Consensus},
  author={Zhao, Zijian and Gao, Jing and Li, Sen},
  journal={arXiv preprint arXiv:2604.13472},
  year={2026}
}

@article{wen2022multi,
  title={Multi-agent reinforcement learning is a sequence modeling problem},
  author={Wen, Muning and Kuba, Jakub and Lin, Runji and Zhang, Weinan and Wen, Ying and Wang, Jun and Yang, Yaodong},
  journal={Advances in Neural Information Processing Systems},
  volume={35},
  pages={16509--16521},
  year={2022}
}

@misc{dreamer,
      title={Mastering Diverse Domains through World Models}, 
      author={Danijar Hafner and Jurgis Pasukonis and Jimmy Ba and Timothy Lillicrap},
      year={2024},
      eprint={2301.04104},
      archivePrefix={arXiv},
      primaryClass={cs.AI},
      url={https://arxiv.org/abs/2301.04104}, 
}

@article{chen2003bayesian,
  title={Bayesian filtering: From Kalman filters to particle filters, and beyond},
  author={Chen, Zhe and others},
  journal={Statistics},
  volume={182},
  number={1},
  pages={1--69},
  year={2003}
}

@article{fang2018nonlinear,
  title={Nonlinear Bayesian estimation: From Kalman filtering to a broader horizon},
  author={Fang, Huazhen and Tian, Ning and Wang, Yebin and Zhou, MengChu and Haile, Mulugeta A},
  journal={IEEE/CAA Journal of Automatica Sinica},
  volume={5},
  number={2},
  pages={401--417},
  year={2018},
  publisher={IEEE}
}

@article{konda1999actor,
  title={Actor-critic algorithms},
  author={Konda, Vijay and Tsitsiklis, John},
  journal={Advances in neural information processing systems},
  volume={12},
  year={1999}
}

@article{metasym,
  title={MetaSym: A Symplectic Meta-learning Framework for Physical Intelligence},
  author={Vaidhyanathan, Pranav and Papatheodorou, Aristotelis and Mitchison, Mark T and Ares, Natalia and Havoutis, Ioannis},
  journal={arXiv preprint arXiv:2502.16667},
  year={2025}
}

@article{schorling2025meta,
  title={Meta-learning characteristics and dynamics of quantum systems},
  author={Schorling, Lucas and Vaidhyanathan, Pranav and Schuff, Jonas and Carballido, Miguel J and Zumb{\"u}hl, Dominik and Milburn, Gerard and Marquardt, Florian and Foerster, Jakob and Osborne, Michael A and Ares, Natalia},
  journal={arXiv preprint arXiv:2503.10492},
  year={2025}
}

@article{bukov2026reinforcement,
  title={Reinforcement Learning for Quantum Technology},
  author={Bukov, Marin and Marquardt, Florian},
  journal={arXiv preprint arXiv:2601.18953},
  year={2026}
}

@article{vaidhyanathan2026quantum,
  title={Quantum feedback control with a transformer neural network architecture},
  author={Vaidhyanathan, Pranav and Marquardt, Florian and Mitchison, Mark T and Ares, Natalia},
  journal={Physical Review Research},
  volume={8},
  number={1},
  pages={L012043},
  year={2026},
  publisher={APS}
}

@misc{lillicrap2020continuous,
  title={Continuous control with deep reinforcement learning},
  author={Lillicrap, Timothy Paul and Hunt, Jonathan James and Pritzel, Alexander and Heess, Nicolas Manfred Otto and Erez, Tom and Tassa, Yuval and Silver, David and Wierstra, Daniel Pieter},
  year={2020},
  month=sep # "~15",
  publisher={Google Patents},
  note={US Patent 10,776,692}
}

@inproceedings{bandit, 
    title={Survey on applications of multi-armed and contextual bandits}, 
    author={Bouneffouf, Djallel and Rish, Irina and Aggarwal, Charu}, 
    booktitle={2020 IEEE congress on evolutionary computation (CEC)}, 
    pages={1--8}, 
    year={2020}, 
    organization={IEEE} 
}

@inproceedings{xu2018experience,
  title={Experience-driven networking: A deep reinforcement learning based approach},
  author={Xu, Zhiyuan and Tang, Jian and Meng, Jingsong and Zhang, Weiyi and Wang, Yanzhi and Liu, Chi Harold and Yang, Dejun},
  booktitle={IEEE INFOCOM 2018-IEEE conference on computer communications},
  pages={1871--1879},
  year={2018},
  organization={IEEE}
}

@article{hochreiter1997lstm,
  title={Long short-term memory},
  author={Hochreiter, Sepp and Schmidhuber, J{"u}rgen},
  journal={Neural Computation},
  volume={9},
  number={8},
  pages={1735--1780},
  year={1997},
  publisher={MIT Press}
}

@inproceedings{he2016resnet,
  title={Deep residual learning for image recognition},
  author={He, Kaiming and Zhang, Xiangyu and Ren, Shaoqing and Sun, Jian},
  booktitle={Proceedings of the IEEE Conference on Computer Vision and Pattern Recognition (CVPR)},
  pages={770--778},
  year={2016}
}

@article{nelder1965simplex,
  title={A simplex method for function minimization},
  author={Nelder, John A and Mead, Roger},
  journal={The Computer Journal},
  volume={7},
  number={4},
  pages={308--313},
  year={1965},
  publisher={Oxford University Press}
}

@book{rasmussen2006gpml,
  title={Gaussian Processes for Machine Learning},
  author={Rasmussen, Carl Edward and Williams, Christopher K I},
  year={2006},
  publisher={MIT Press}
}

@article{shahriari2016taking,
  title={Taking the human out of the loop: A review of Bayesian optimization},
  author={Shahriari, Bobak and Swersky, Kevin and Wang, Ziyu and Adams, Ryan P and de Freitas, Nando},
  journal={Proceedings of the IEEE},
  volume={104},
  number={1},
  pages={148--175},
  year={2016},
  publisher={IEEE}
}

@article{nocedal1980lbfgs,
  title={Updating quasi-Newton matrices with limited storage},
  author={Nocedal, Jorge},
  journal={Mathematics of Computation},
  volume={35},
  number={151},
  pages={773--782},
  year={1980}
}

@book{massart2007concentration,
  title={Concentration inequalities and model selection: Ecole d'Et{\'e} de Probabilit{\'e}s de Saint-Flour XXXIII-2003},
  author={Massart, Pascal},
  year={2007},
  publisher={Springer}
}

@article{li2024kaleidoscope,
  title={Kaleidoscope: Learnable masks for heterogeneous multi-agent reinforcement learning},
  author={Li, Xinran and Pan, Ling and Zhang, Jun},
  journal={Advances in Neural Information Processing Systems},
  volume={37},
  pages={22081--22106},
  year={2024}
}

@unpublished{guilmin2025dynamiqs,
  title  = {Dynamiqs: an open-source Python library for GPU-accelerated and differentiable simulation of quantum systems},
  author = {Pierre Guilmin and Adrien Bocquet and {\'{E}}lie Genois and Daniel Weiss and Ronan Gautier},
  year   = {2025},
  url    = {https://github.com/dynamiqs/dynamiqs}
}

@article{koch2007charge,
  title={Charge-insensitive qubit design derived from the Cooper pair box},
  author={Koch, Jens and Yu, Terri M and Gambetta, Jay and Houck, Andrew A and Schuster, David I and Majer, Johannes and Blais, Alexandre and Devoret, Michel H and Girvin, Steven M and Schoelkopf, Robert J},
  journal={Physical Review A—Atomic, Molecular, and Optical Physics},
  volume={76},
  number={4},
  pages={042319},
  year={2007},
  publisher={APS}
}

@article{bertrand2015quantum,
  title={Quantum manipulation of two-electron spin states in isolated double quantum dots},
  author={Bertrand, Benoit and Flentje, Hanno and Takada, Shintaro and Yamamoto, Michihisa and Tarucha, Seigo and Ludwig, Arne and Wieck, Andreas D and B{\"a}uerle, Christopher and Meunier, Tristan},
  journal={Physical review letters},
  volume={115},
  number={9},
  pages={096801},
  year={2015},
  publisher={APS}
}

@article{wesdorp2026mitigating,
  title={Mitigating crosstalk errors for simultaneous single-qubit gates on a superconducting quantum processor},
  author={Wesdorp, Jaap J and Hyypp{\"a}, Eric and Andersson, Joona and Adam, Janos and Beriwal, Rohit and Bergholm, Ville and Dahl, Saga and Fasciati, Simone Diego and Friero, Alejandro Gomez and Gao, Zheming and others},
  journal={arXiv preprint arXiv:2603.11018},
  year={2026}
}

@article{gao2021practical,
  title={Practical guide for building superconducting quantum devices},
  author={Gao, Yvonne Y and Rol, M Adriaan and Touzard, Steven and Wang, Chen},
  journal={PRX quantum},
  volume={2},
  number={4},
  pages={040202},
  year={2021},
  publisher={APS}
}

@book{reed2013entanglement,
  title={Entanglement and quantum error correction with superconducting qubits},
  author={Reed, Matthew},
  year={2013},
  publisher={Lulu. com}
}
\bibliographystyle{unsrt}

}


\newpage 

\appendix

\section{Proofs on Virtualization and Filters}
\label{app:proof}

\paragraph{Notation.}

For $z\in\mathbb{R}^d$, define $\|z\|_2 := \sqrt{z^\top z}$.
For a matrix $A\in\mathbb{R}^{m\times n}$, define the spectral norm
$$
\|A\|_2 := \sup_{\|x\|_2 = 1} \|Ax\|_2.
$$
Recall that \(\|A\|_2\) equals the largest singular value $\sigma_{\max}(A)$, and for any $x$,
$$
\|Ax\|_2 \le \|A\|_2 \, \|x\|_2
\qquad\text{(by the definition of the supremum)}.
$$
We also use the elementary implication: if $0 \prec B \preceq A$ (PSD order), then $A^{-1} \preceq B^{-1}$.

\subsubsection{Local linearization model}
We start by fixing an operating point $u_0\in\mathbb{R}^D$ (plunger voltages). Let $x(u)\in\mathbb{R}^D$
denote the local device state. Let $J(u) := \nabla x(u)\in\mathbb{R}^{D\times D}$.

\begin{assumption}[Locally Lipschitz Jacobian]
\label{ass:lipschitz_jacobian}
There exists a convex neighborhood $mathcal{U}\subset\mathbb{R}^D$ and a constant $L\ge 0$ such that
for all $u,v\in\mathcal{U}$,
$$
\|J(u)-J(v)\|_2 \le L\|u-v\|_2.
$$
\end{assumption}

\begin{lemma}[First-order expansion with explicit remainder bound]
\label{lem:taylor_remainder}
Fix \(u_0\in\mathcal{U}\) and define \(C := J(u_0)\). For any \(u\in\mathcal{U}\),
there exists a remainder vector \(r(u)\in\mathbb{R}^D\) such that
$$
x(u) = x(u_0) + C(u-u_0) + r(u),
$$
and \(r(u)\) satisfies the explicit bound
$$
\|r(u)\|_2 \le \frac{L}{2}\|u-u_0\|_2^2.
$$
\end{lemma}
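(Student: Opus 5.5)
The remainder is defined by the expansion itself, so the only real content is the bound. I would simply set $r(u) := x(u) - x(u_0) - C(u-u_0)$, which makes the identity hold by construction. To bound $r(u)$, I use the integral form of the mean value theorem along the segment joining $u_0$ to $u$. Since $\mathcal{U}$ is convex (Assumption~\ref{ass:lipschitz_jacobian}), the path $\gamma(s) := u_0 + s(u-u_0)$ for $s\in[0,1]$ stays inside $\mathcal{U}$. Implicitly I also assume $x$ is continuously differentiable on $\mathcal{U}$, which is needed for $J$ to be defined; a Lipschitz $J$ is in particular continuous.

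The key steps, in order, are as follows.

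\begin{enumerate}
\item By the fundamental theorem of calculus applied componentwise to $s\mapsto x(\gamma(s))$, with the chain rule giving derivative $J(\gamma(s))(u-u_0)$, I get
\[
x(u)-x(u_0) = \int_0^1 J(\gamma(s))(u-u_0)\,ds .
\]
\item Subtracting $C(u-u_0) = \int_0^1 J(u_0)(u-u_0)\,ds$ yields
\[
r(u) = \int_0^1 \bigl(J(\gamma(s)) - J(u_0)\bigr)(u-u_0)\,ds .
\]
\item I bound the integrand pointwise. The norm of a vector-valued integral is at most the integral of the norm, by the triangle inequality for Riemann sums or by Jensen. Together with the operator-norm inequality $\|Ax\|_2\le\|A\|_2\|x\|_2$ from the Notation paragraph, this gives
\[
\|r(u)\|_2 \le \int_0^1 \|J(\gamma(s))-J(u_0)\|_2\,\|u-u_0\|_2\,ds .
\]
\item The Lipschitz assumption gives $\|J(\gamma(s))-J(u_0)\|_2 \le L\|\gamma(s)-u_0\|_2 = Ls\|u-u_0\|_2$.
\item Integrating, $\int_0^1 Ls\,ds = L/2$, which gives $\|r(u)\|_2 \le \frac{L}{2}\|u-u_0\|_2^2$.
\end{enumerate}

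The main obstacle is justifying step 1 for a vector-valued map. The scalar mean value theorem does not give an equality for vector functions, which is why I use the integral form instead. I would state explicitly that $x$ is $C^1$ on $\mathcal{U}$, so that $s\mapsto J(\gamma(s))(u-u_0)$ is continuous and the fundamental theorem of calculus applies to each coordinate. After that, the norm-of-integral inequality is standard, and the remaining steps are routine.
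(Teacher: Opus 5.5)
Your proposal is correct and takes essentially the same route as the paper: the fundamental theorem of calculus along the segment $\gamma(s)=u_0+s(u-u_0)$ (which stays in $\mathcal{U}$ by convexity), subtracting $C(u-u_0)$ inside the integral, and bounding the integrand by $Ls\|u-u_0\|_2^2$ using the operator-norm inequality and the Lipschitz assumption before integrating. Your explicit remark that $x$ must be continuously differentiable on $\mathcal{U}$ is a small addition the paper leaves tacit; it follows from $J$ being defined and Lipschitz there.
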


\begin{proof}
Define the path \(\gamma:[0,1]\to\mathcal{U}\) by \(\gamma(t)=u_0+t(u-u_0)\) (convexity ensures \(\gamma(t)\in\mathcal{U}\)).
By the fundamental theorem of calculus,
$$
x(u)-x(u_0) = \int_0^1 \frac{d}{dt}x(\gamma(t))\,dt
           = \int_0^1 J(\gamma(t))(u-u_0)\,dt.
$$
Add and subtract \(C(u-u_0)\) inside the integral and define
$$
r(u) := \int_0^1 \big(J(\gamma(t)) - C\big)(u-u_0)\,dt.
$$
Then the expansion holds. For the bound, use the definition of \(\|\cdot\|_2\) and Assumption~\ref{ass:lipschitz_jacobian}:
for each \(t\in[0,1]\),
$$
\| \big(J(\gamma(t)) - C\big)(u-u_0)\|_2
\le \|J(\gamma(t)) - C\|_2 \,\|u-u_0\|_2
\le L\|\gamma(t)-u_0\|_2\,\|u-u_0\|_2
= Lt\|u-u_0\|_2^2.
$$
Integrating \(t\) from \(0\) to \(1\) gives
$$
\|r(u)\|_2 \le \int_0^1 Lt\|u-u_0\|_2^2\,dt = \frac{L}{2}\|u-u_0\|_2^2.
$$
\end{proof}

\subsubsection{One-step quadratic objective and greedy optimum}
Let \(x^\star\in\mathbb{R}^D\) be a target and define \(e(u):=x(u)-x^\star\).
At iteration \(t\), the algorithm chooses \(\Delta u_t\) and applies \(u_{t+1}=u_t+\Delta u_t\).
In the linearized model (dropping \(r(\cdot)\) for the moment), the one-step surrogate cost is
$$
J_t(\Delta u) := \frac12\|e_t + C\Delta u\|_2^2,
\qquad e_t := e(u_t).
$$

\begin{proposition}[Greedy optimum in raw coordinates]
\label{prop:raw_opt}
If \(C\) is invertible, the unique minimizer of \(J_t(\Delta u)\) is
$$
\Delta u_t^\star = -C^{-1}e_t,
$$
and under the linear model the next error satisfies \(e_{t+1}=0\).
\end{proposition}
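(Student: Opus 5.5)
The plan is to treat $J_t$ as a convex quadratic in $\Delta u$, identify its unique minimizer from the first-order condition, and then read off the next error from the linear model.

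\textbf{Step 1: gradient and Hessian.} Expanding the square gives
$$
J_t(\Delta u)=\tfrac12 e_t^\top e_t + e_t^\top C\Delta u + \tfrac12 \Delta u^\top C^\top C\,\Delta u .
$$
Hence $\nabla J_t(\Delta u)=C^\top(e_t+C\Delta u)$ and $\nabla^2 J_t = C^\top C$.

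\textbf{Step 2: strict convexity.} Since $C$ is invertible, $\Delta u^\top C^\top C\,\Delta u=\|C\Delta u\|_2^2>0$ for every $\Delta u\neq 0$. So $C^\top C\succ 0$ and $J_t$ is strictly convex. Any stationary point is therefore the unique global minimizer.

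\textbf{Step 3: first-order condition.} Setting the gradient to zero gives $C^\top(e_t+C\Delta u)=0$. Because $C^\top$ is invertible, this reduces to $e_t+C\Delta u=0$, so $\Delta u_t^\star=-C^{-1}e_t$.

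\textbf{Step 4: a shorter check.} Alternatively, one can skip the calculus. Since $J_t\ge 0$ and $J_t(-C^{-1}e_t)=0$, this point is a global minimizer. Uniqueness holds because $J_t(\Delta u)=0$ forces $C\Delta u=-e_t$, and injectivity of $C$ then pins down $\Delta u$. I would present this version, since it is cleaner.

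\textbf{Step 5: next error.} In the linear model of Lemma~\ref{lem:taylor_remainder} with $r$ dropped, $x(u_{t+1})=x(u_t)+C\Delta u_t$. Subtracting $x^\star$ gives $e_{t+1}=e_t+C\Delta u_t^\star=e_t-CC^{-1}e_t=0$.

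\textbf{Main obstacle.} There is no real difficulty here. The only care needed is to state clearly that "the linear model" means applying the expansion with $C$ linearized at the current iterate $u_t$ and the remainder $r$ set to zero. This makes $e_{t+1}=0$ a statement about the surrogate dynamics, not about the true $x(\cdot)$. The true-error version would instead carry a remainder of size at most $\tfrac{L}{2}\|C^{-1}e_t\|_2^2$, by Lemma~\ref{lem:taylor_remainder}.
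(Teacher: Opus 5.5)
Your proposal is correct. Steps 1--3 are essentially the paper's proof. The paper also computes $\nabla J_t = C^\top(e_t + C\Delta u)$, sets it to zero, and invokes positive definiteness of $C^\top C$. It then solves the normal equations as $\Delta u = -(C^\top C)^{-1}C^\top e_t$ and verifies $(C^\top C)^{-1}C^\top = C^{-1}$, whereas you simply cancel the invertible factor $C^\top$.

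The version you say you would actually present (Step 4) is a genuinely different and more elementary route. Nonnegativity of $J_t$, attainment of the value $0$ at $-C^{-1}e_t$, and injectivity of $C$ give existence and uniqueness of the minimizer at once, with no differentiation and no convexity argument. By contrast, the paper's ``at a minimizer the gradient vanishes'' only identifies the unique candidate; existence rests on the positive-definite Hessian, which your Step 2 makes explicit.

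In exchange, the paper's normal-equation route does not rely on the residual being exactly zeroable. It carries over unchanged to a rectangular $C$ with full column rank, giving the least-squares step $-(C^\top C)^{-1}C^\top e_t$, which is the pseudoinverse construction the paper uses in its superconducting-qubit appendix. There the minimum of $J_t$ is generally positive, and your zero-value argument breaks down.

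Your caveat about the linear model is also apt. Note only that in this section the paper fixes $C = J(u_0)$. The update $e_{t+1} = e_t + C\Delta u_t$ holds under the linearized model in either reading. Your remainder bound $\tfrac{L}{2}\|C^{-1}e_t\|_2^2$, however, is the one that applies when $C = J(u_t)$, as in the paper's later robustness subsection.
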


\begin{proof}
Write \(J_t(\Delta u)=\tfrac12(e_t+C\Delta u)^\top(e_t+C\Delta u)\).
Differentiating w.r.t.\ \(\Delta u\) yields
$$
\nabla_{\Delta u}J_t(\Delta u) = C^\top(e_t+C\Delta u).
$$
At a minimizer, \(\nabla_{\Delta u}J_t(\Delta u)=0\), i.e.
$$
C^\top C\,\Delta u = -C^\top e_t.
$$
Since \(C\) is invertible, \(C^\top C\) is positive definite and invertible, hence the solution is unique:
$$
\Delta u = -(C^\top C)^{-1}C^\top e_t.
$$
Because \(C\) is invertible, one checks directly that \((C^\top C)^{-1}C^\top = C^{-1}\) (multiply by \(C\) on the right to obtain the identity),
so \(\Delta u_t^\star=-C^{-1}e_t\). Substituting gives \(e_{t+1}=e_t+C\Delta u_t^\star=0\).
\end{proof}

\subsubsection{Virtualization as a preconditioned coordinate system}
QADAPT constructs an estimate \(\widehat C_t\) of the cross-capacitance and uses it to define a
virtualized coordinate \(\Delta v_t\) through the linear change of variables
$$
\Delta u_t = \widehat C_t^{-1}\Delta v_t.
$$
Under the linear model, the error evolves as
$$
e_{t+1} = e_t + C\widehat C_t^{-1}\Delta v_t.
$$

\begin{theorem}[Exact cancellation in ideal virtual coordinates]
\label{thm:ideal_cancel}
Assume \(C\) and \(\widehat C_t\) are invertible and choose \(\Delta v_t=-e_t\).
Define the mismatch matrix
$$
E_t := I - C\widehat C_t^{-1}.
$$
Then
$$
e_{t+1} = E_t e_t.
$$
Moreover, if there exists \(\rho\in[0,1)\) such that \(\|E_t\|_2\le \rho\) for all \(t\), then
$$
\|e_t\|_2 \le \rho^t \|e_0\|_2 \qquad \text{for all } t\ge 0.
$$
\end{theorem}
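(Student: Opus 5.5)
The plan is a direct substitution into the linear error recursion, followed by a norm-submultiplicativity induction. First, starting from the linearized update $e_{t+1} = e_t + C\widehat C_t^{-1}\Delta v_t$ stated just before Theorem~\ref{thm:ideal_cancel}, I will plug in the choice $\Delta v_t=-e_t$. This gives
\[
e_{t+1} = e_t - C\widehat C_t^{-1}e_t = \bigl(I - C\widehat C_t^{-1}\bigr)e_t = E_t e_t .
\]
This establishes the first claim. Invertibility of $\widehat C_t$ is only needed so that $\widehat C_t^{-1}$, and hence $E_t$, is well defined. Invertibility of $C$ is not actually used for this identity, but I will remark that it makes $E_t=0$ attainable exactly when $\widehat C_t=C$. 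In that case the statement reduces to Proposition~\ref{prop:raw_opt}, since $\Delta u_t=\widehat C_t^{-1}\Delta v_t=-C^{-1}e_t$ is the greedy optimum.

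For the contraction bound I will argue by induction on $t$, using the operator-norm inequality $\|Ax\|_2\le\|A\|_2\|x\|_2$ recalled in the Notation paragraph. The base case $t=0$ is the trivial equality $\|e_0\|_2=\rho^0\|e_0\|_2$. For the inductive step, assume $\|e_t\|_2\le\rho^t\|e_0\|_2$. Then
\[
\|e_{t+1}\|_2=\|E_te_t\|_2\le\|E_t\|_2\|e_t\|_2\le\rho\cdot\rho^t\|e_0\|_2 .
\]
Equivalently, I could unroll the recursion to $e_t=E_{t-1}\cdots E_0e_0$ and apply submultiplicativity of the spectral norm. I would note that this submultiplicativity follows from the same supremum inequality applied twice.

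I do not expect any genuine obstacle, since everything is within the linear model and the remainder $r(\cdot)$ from Lemma~\ref{lem:taylor_remainder} has been dropped. The only point needing care is to keep $C$ as the fixed Jacobian $J(u_0)$ throughout, while the estimate $\widehat C_t$ (and so $E_t$) is allowed to vary with $t$. For that reason I will use the uniform-in-$t$ hypothesis $\|E_t\|_2\le\rho$ rather than a single fixed matrix. I might also add the remark that $\rho<1$ yields geometric convergence $e_t\to0$.
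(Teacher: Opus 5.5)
Your proposal is correct and follows the paper's proof: substitute \(\Delta v_t=-e_t\) into the linear recursion to get \(e_{t+1}=E_te_t\), then use \(\|e_{t+1}\|_2\le\|E_t\|_2\|e_t\|_2\le\rho\|e_t\|_2\) and iterate. Your side remark is also accurate: invertibility of \(C\) is not needed for the identity \(e_{t+1}=E_te_t\).
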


\begin{proof}
Substitute \(\Delta v_t=-e_t\) into the update:
$$
e_{t+1} = e_t - C\widehat C_t^{-1}e_t = (I-C\widehat C_t^{-1})e_t = E_t e_t.
$$
To bound \(\|e_{t+1}\|_2\), use the definition of \(\|E_t\|_2\):
$$
\|e_{t+1}\|_2 = \|E_t e_t\|_2
\le \|E_t\|_2 \|e_t\|_2
\le \rho \|e_t\|_2.
$$
Iterating this inequality yields \(\|e_t\|_2 \le \rho^t\|e_0\|_2\).
\end{proof}

\paragraph{Mismatch in terms of estimation error.}
Let \(\Delta_t := \widehat C_t - C\).

\begin{lemma}[Mismatch identity and explicit norm bound]
\label{lem:Et_delta}
If \(\widehat C_t\) is invertible, then
$$
E_t = \Delta_t \widehat C_t^{-1}.
$$
Consequently,
$$
\|E_t\|_2 \le \|\Delta_t\|_2 \,\|\widehat C_t^{-1}\|_2.
$$
\end{lemma}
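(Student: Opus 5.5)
The plan is a direct algebraic verification of the identity, followed by submultiplicativity of the spectral norm. Both are elementary.

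\textbf{Identity.} Since $\widehat C_t$ is invertible, write the identity as $I=\widehat C_t\widehat C_t^{-1}$ inside the definition of $E_t$ from Theorem~\ref{thm:ideal_cancel}:
\[
E_t = I - C\widehat C_t^{-1} = \widehat C_t\widehat C_t^{-1} - C\widehat C_t^{-1}.
\]
Factor $\widehat C_t^{-1}$ out on the right, using distributivity of matrix multiplication:
\[
E_t = (\widehat C_t - C)\,\widehat C_t^{-1} = \Delta_t \widehat C_t^{-1},
\]
where the last step uses the definition $\Delta_t := \widehat C_t - C$. The one point to watch is the order of factors, since matrices do not commute. Because $C$ multiplies $\widehat C_t^{-1}$ from the left, $\widehat C_t^{-1}$ must be factored out on the right, not the left. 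Invertibility of $C$ itself is not needed here, only that of $\widehat C_t$.

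\textbf{Norm bound.} Use submultiplicativity $\|AB\|_2\le\|A\|_2\|B\|_2$, which follows from the pointwise inequality $\|Ax\|_2\le\|A\|_2\|x\|_2$ recorded in the Notation paragraph. For any unit vector $x$,
\[
\|\Delta_t\widehat C_t^{-1}x\|_2 \le \|\Delta_t\|_2\,\|\widehat C_t^{-1}x\|_2 \le \|\Delta_t\|_2\,\|\widehat C_t^{-1}\|_2 .
\]
Taking the supremum over $\|x\|_2=1$ gives $\|E_t\|_2\le\|\Delta_t\|_2\|\widehat C_t^{-1}\|_2$.

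There is no real obstacle. The only care needed is keeping the factor order and justifying submultiplicativity from the pointwise bound, as above.
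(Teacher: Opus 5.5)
Your proposal is correct and matches the paper's proof essentially step for step. The only cosmetic difference is in the identity: the paper substitutes $C=\widehat C_t-\Delta_t$ into $C\widehat C_t^{-1}$, whereas you insert $I=\widehat C_t\widehat C_t^{-1}$ and factor on the right, which is the same one-line algebra; the norm bound is proved by the same pointwise bound followed by a supremum over unit vectors.
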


\begin{proof}
Compute
$$
C\widehat C_t^{-1} = (\widehat C_t-\Delta_t)\widehat C_t^{-1}
= \widehat C_t\widehat C_t^{-1} - \Delta_t\widehat C_t^{-1}
= I - \Delta_t\widehat C_t^{-1}.
$$
Hence \(E_t = I - C\widehat C_t^{-1}=\Delta_t\widehat C_t^{-1}\).

For the norm bound, start from the definition of \(\|E_t\|_2\):
$$
\|E_t\|_2
= \sup_{\|x\|_2=1}\|E_t x\|_2
= \sup_{\|x\|_2=1}\|\Delta_t\widehat C_t^{-1}x\|_2.
$$
For each such \(x\), let \(y:=\widehat C_t^{-1}x\). Then \(\|\Delta_t y\|_2\le \|\Delta_t\|_2\|y\|_2\).
Therefore
$$
\|\Delta_t\widehat C_t^{-1}x\|_2 \le \|\Delta_t\|_2\,\|\widehat C_t^{-1}x\|_2
\le \|\Delta_t\|_2\,\|\widehat C_t^{-1}\|_2\,\|x\|_2
= \|\Delta_t\|_2\,\|\widehat C_t^{-1}\|_2.
$$
Taking the supremum over \(\|x\|_2=1\) yields the claimed inequality.
\end{proof}

\paragraph{Invertibility and stability of the inverse.}

\begin{lemma}[Neumann-series inverse bound]
\label{lem:inv_perturb}
Let \(C\) be invertible and \(\widehat C = C+\Delta\). If \(\|C^{-1}\|_2\|\Delta\|_2<1\), then \(\widehat C\) is invertible and
$$
\|\widehat C^{-1}\|_2 \le \frac{\|C^{-1}\|_2}{1-\|C^{-1}\|_2\|\Delta\|_2}.
$$
\end{lemma}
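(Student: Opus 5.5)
The plan is to factor $\widehat C$ through $C$ and invert the resulting perturbation of the identity with a Neumann series. First write
$$
\widehat C = C + \Delta = C\,(I + C^{-1}\Delta),
$$
and set $M := -C^{-1}\Delta$, so that $\widehat C = C(I - M)$. By submultiplicativity of the spectral norm (the same supremum argument used in the proof of Lemma~\ref{lem:Et_delta}), we get $\|M\|_2 \le \|C^{-1}\|_2\|\Delta\|_2 =: q < 1$ by hypothesis.

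The central step, and the only one requiring care, is to show that $I-M$ is invertible with $\|(I-M)^{-1}\|_2 \le 1/(1-q)$. I would use the Neumann series:
\begin{itemize}
\item Define $S_n := \sum_{k=0}^{n} M^k$. Since $\|M^k\|_2 \le \|M\|_2^k \le q^k$ and $q<1$, the partial sums are Cauchy in the complete space $\mathbb{R}^{D\times D}$, so they converge to some $S$.
\item The telescoping identity $(I-M)S_n = S_n(I-M) = I - M^{n+1}$ holds, and $\|M^{n+1}\|_2 \le q^{n+1}\to 0$. Passing to the limit gives $(I-M)S = S(I-M) = I$.
\item The triangle inequality then gives $\|S\|_2 \le \sum_{k\ge 0} q^k = 1/(1-q)$.
\end{itemize}
Alternatively, the invertibility part alone follows from $\|(I-M)x\|_2 \ge (1-q)\|x\|_2$, which shows $I-M$ is injective and hence invertible in finite dimensions. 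The same inequality yields the norm bound directly, by taking $x = (I-M)^{-1}y$. I would probably present this shorter route and mention the series as a remark.

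Finally, $\widehat C$ is a product of two invertible matrices, so it is invertible, with $\widehat C^{-1} = (I-M)^{-1}C^{-1}$. Applying submultiplicativity once more,
$$
\|\widehat C^{-1}\|_2 \le \|(I-M)^{-1}\|_2\,\|C^{-1}\|_2 \le \frac{\|C^{-1}\|_2}{1-\|C^{-1}\|_2\|\Delta\|_2},
$$
which is the claimed bound.

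I expect no real obstacle. The only point needing care is the direction of the factorization: we must use $C(I + C^{-1}\Delta)$ rather than $(I + \Delta C^{-1})C$ so that the hypothesis $\|C^{-1}\|_2\|\Delta\|_2<1$ controls exactly the perturbation being inverted. In fact either ordering works, since both norms are bounded by $q$.
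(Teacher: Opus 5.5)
Your proposal is correct and follows the paper's proof: the same factorization $\widehat C = C(I+C^{-1}\Delta)$, a Neumann series giving $\|(I+C^{-1}\Delta)^{-1}\|_2 \le 1/(1-\|C^{-1}\|_2\|\Delta\|_2)$, and submultiplicativity applied to $\widehat C^{-1} = (I+C^{-1}\Delta)^{-1}C^{-1}$. Your Cauchy-sequence argument for convergence of the partial sums is more careful than the paper's, which infers $S_n \to S$ from $X^{n+1}\to 0$; your shorter alternative via $\|(I-M)x\|_2 \ge (1-q)\|x\|_2$ is also valid, and it is the same lower-bound argument the paper later uses for $\sigma_{\min}(M_t)$ in Lemma~\ref{lem:B8_precond_kappa}.
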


\begin{proof}
Factor \(\widehat C\) as
$$
\widehat C = C(I+C^{-1}\Delta).
$$
Let \(X:=C^{-1}\Delta\). The assumption implies \(\|X\|_2<1\).
Consider the series \(S:=\sum_{k=0}^\infty (-X)^k\). For each partial sum \(S_n=\sum_{k=0}^n(-X)^k\),
$$
(I+X)S_n = \sum_{k=0}^n(-X)^k + \sum_{k=0}^n X(-X)^k
= I + \sum_{k=1}^n(-X)^k + \sum_{k=1}^{n+1}(-X)^k
= I - (-X)^{n+1}.
$$
Since \(\|X\|_2<1\), we have \(\|(-X)^{n+1}\|_2 \le \|X\|_2^{n+1}\to 0\), hence \(S_n\to S\) and \((I+X)S=I\).
Thus \((I+X)^{-1}=S\) exists and
$$
\|(I+X)^{-1}\|_2 = \|S\|_2
\le \sum_{k=0}^\infty \|X\|_2^k
= \frac{1}{1-\|X\|_2}.
$$
Therefore
$$
\widehat C^{-1} = (I+C^{-1}\Delta)^{-1}C^{-1},
\qquad
\|\widehat C^{-1}\|_2 \le \|(I+X)^{-1}\|_2\,\|C^{-1}\|_2
\le \frac{\|C^{-1}\|_2}{1-\|C^{-1}\|_2\|\Delta\|_2}.
$$
\end{proof}

\begin{corollary}[Explicit sufficient condition for contraction]
\label{cor:contract_condition}
Assume \(C\) is invertible and \(\widehat C_t=C+\Delta_t\).
If \(\|C^{-1}\|_2\|\Delta_t\|_2<\tfrac12\), then \(\|E_t\|_2<1\). In fact,
$$
\|E_t\|_2 \le \frac{\|C^{-1}\|_2\|\Delta_t\|_2}{1-\|C^{-1}\|_2\|\Delta_t\|_2}.
$$
\end{corollary}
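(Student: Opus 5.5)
The plan is to chain the two preceding lemmas. Write $a := \|C^{-1}\|_2\|\Delta_t\|_2$, so the hypothesis of the corollary reads $a<\tfrac12$.

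First, since $a<\tfrac12<1$, Lemma~\ref{lem:inv_perturb} applies with $\Delta=\Delta_t$. It guarantees that $\widehat C_t$ is invertible, which is needed before $E_t$ is even defined. It also gives
$$
\|\widehat C_t^{-1}\|_2 \le \frac{\|C^{-1}\|_2}{1-a}.
$$

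Second, with $\widehat C_t$ invertible, Lemma~\ref{lem:Et_delta} gives $\|E_t\|_2 \le \|\Delta_t\|_2\,\|\widehat C_t^{-1}\|_2$. Substituting the inverse bound from the first step yields
$$
\|E_t\|_2 \le \frac{\|\Delta_t\|_2\|C^{-1}\|_2}{1-a} = \frac{a}{1-a},
$$
which is exactly the displayed estimate.

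Finally, the map $a\mapsto a/(1-a)$ is strictly increasing on $[0,1)$ and equals $1$ at $a=\tfrac12$. Hence $a<\tfrac12$ gives $a/(1-a)<1$, so $\|E_t\|_2<1$. Equivalently, $a<1-a$ iff $a<\tfrac12$.

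There is no real obstacle here. The only point needing care is the order of the steps: invertibility of $\widehat C_t$ must be obtained from Lemma~\ref{lem:inv_perturb} before Lemma~\ref{lem:Et_delta} is invoked, since the latter assumes it. One could also note, though it is not required, that if the bound holds uniformly in $t$ with $a\le a_0<\tfrac12$, then $\rho:=a_0/(1-a_0)$ feeds directly into Theorem~\ref{thm:ideal_cancel}.
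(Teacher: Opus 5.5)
Your proposal is correct and follows essentially the same route as the paper: combine Lemma~\ref{lem:Et_delta} with the Neumann-series bound of Lemma~\ref{lem:inv_perturb}, then check that $a/(1-a)<1$ when $a<\tfrac12$. Your ordering, which first obtains invertibility of $\widehat C_t$ from Lemma~\ref{lem:inv_perturb} and only then invokes Lemma~\ref{lem:Et_delta}, is slightly more careful than the paper's, which cites Lemma~\ref{lem:Et_delta} first.
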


\begin{proof}
By Lemma~\ref{lem:Et_delta},
$$
\|E_t\|_2 \le \|\Delta_t\|_2 \,\|\widehat C_t^{-1}\|_2.
$$
Apply Lemma~\ref{lem:inv_perturb} with \(\Delta=\Delta_t\):
$$
\|\widehat C_t^{-1}\|_2 \le \frac{\|C^{-1}\|_2}{1-\|C^{-1}\|_2\|\Delta_t\|_2}.
$$
Multiplying yields the claimed bound. Let \(a:=\|C^{-1}\|_2\|\Delta_t\|_2\). If \(a<1/2\), then
$$
\frac{a}{1-a} < \frac{1/2}{1-1/2}=1,
$$
hence \(\|E_t\|_2<1\).
\end{proof}

\subsection{Virtualization Improves Conditioning (Preconditioning)}
\label{sec:virt_preconditioning}

Consider minimizing the one-step quadratic surrogate
$$
J_t(\Delta u) := \frac12\|e_t + C\,\Delta u\|_2^2.
$$
In  voltage coordinates, the Hessian is
$$
\nabla^2_{\Delta u} J_t(\Delta u) = C^\top C,
$$
so the conditioning of the local quadratic model is governed by
$$
\kappa(C^\top C) = \frac{\lambda_{\max}(C^\top C)}{\lambda_{\min}(C^\top C)} = \kappa(C)^2,
$$
which can be large when gate cross-talk makes \(C\) ill-conditioned.

Under virtualization, we change variables via
$$
\Delta u = \widehat C_t^{-1}\Delta v.
$$
Substituting into \(J_t\) yields the virtual-coordinate objective
$$
J_t(\widehat C_t^{-1}\Delta v)
= \frac12\Big\|e_t + C\,\widehat C_t^{-1}\Delta v\Big\|_2^2
= \frac12\Big\|e_t + M_t \Delta v\Big\|_2^2,
\qquad
M_t := C\,\widehat C_t^{-1}.
$$
The Hessian with respect to \(\Delta v\) is therefore
$$
\nabla^2_{\Delta v} J_t(\widehat C_t^{-1}\Delta v) = M_t^\top M_t.
$$
Hence the conditioning of the virtual-coordinate quadratic problem is
$$
\kappa(M_t^\top M_t)
= \frac{\lambda_{\max}(M_t^\top M_t)}{\lambda_{\min}(M_t^\top M_t)}
= \frac{\sigma_{\max}(M_t)^2}{\sigma_{\min}(M_t)^2}
= \kappa(M_t)^2.
$$
Thus, virtualization improves conditioning exactly when \(M_t\) is close to the identity.

Recall from Theorem~B.4 that the mismatch matrix is defined as
$$
E_t := I - C\,\widehat C_t^{-1}.
$$
Equivalently,
$$
M_t = C\,\widehat C_t^{-1} = I - E_t.
$$
The next lemma shows that small mismatch implies \(\kappa(M_t)\) is close to \(1\).

\begin{lemma}[Condition number bound for the preconditioned system)]
\label{lem:B8_precond_kappa}
Let
$$
M_t := C\,\widehat C_t^{-1}
\qquad\text{and}\qquad
E_t := I - M_t.
$$
If
$$
\|E_t\|_2 < 1,
$$
then \(M_t\) is nonsingular and
$$
\kappa(M_t) \le \frac{1+\|E_t\|_2}{1-\|E_t\|_2}.
$$
Consequently,
$$
\kappa(M_t^\top M_t) \le \left(\frac{1+\|E_t\|_2}{1-\|E_t\|_2}\right)^2.
$$
\end{lemma}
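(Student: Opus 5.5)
The plan is to bound the extreme singular values of $M_t = I - E_t$ directly via the triangle inequality, and then form their ratio. Write $\epsilon := \|E_t\|_2 < 1$.

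\emph{Largest singular value.} For any unit vector $x$, the triangle inequality gives
$\|M_t x\|_2 = \|x - E_t x\|_2 \le \|x\|_2 + \|E_t x\|_2 \le 1+\epsilon$.
Hence $\sigma_{\max}(M_t)=\|M_t\|_2 \le 1+\epsilon$.

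\emph{Smallest singular value.} The reverse triangle inequality gives
$\|M_t x\|_2 \ge \|x\|_2 - \|E_t x\|_2 \ge 1-\epsilon > 0$.
Taking the infimum over unit $x$ yields $\sigma_{\min}(M_t)\ge 1-\epsilon$. In particular $M_t$ has trivial kernel, so as a square matrix it is nonsingular. Alternatively, nonsingularity follows from the Neumann-series argument in the proof of Lemma~\ref{lem:inv_perturb} applied with $X=-E_t$, which also gives $\|M_t^{-1}\|_2\le 1/(1-\epsilon)$; this is the same bound, since $\sigma_{\min}(M_t)=1/\|M_t^{-1}\|_2$.

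\emph{Conclusion.} With the convention $\kappa(M_t)=\sigma_{\max}(M_t)/\sigma_{\min}(M_t)=\|M_t\|_2\|M_t^{-1}\|_2$, the two bounds combine to give $\kappa(M_t)\le (1+\epsilon)/(1-\epsilon)$. For the second claim, I will use the identity already noted in Section~\ref{sec:virt_preconditioning}: the eigenvalues of $M_t^\top M_t$ are the squared singular values of $M_t$, so $\kappa(M_t^\top M_t)=\kappa(M_t)^2$. Squaring the bound from the previous step then finishes the proof.

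I don't expect a real obstacle here. The only point needing care is to justify $\sigma_{\min}(M_t)=\inf_{\|x\|_2=1}\|M_t x\|_2$, either via the SVD or via the equality with $1/\|M_t^{-1}\|_2$. That characterization is what legitimately turns the pointwise lower bound into a bound on the condition number.
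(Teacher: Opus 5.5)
Your proposal is correct and follows essentially the same route as the paper: you bound $\sigma_{\max}(M_t)\le 1+\|E_t\|_2$ by the triangle inequality and $\sigma_{\min}(M_t)\ge 1-\|E_t\|_2$ by the reverse triangle inequality over unit vectors, then take the ratio and square it using $\kappa(M_t^\top M_t)=\kappa(M_t)^2$. Your optional Neumann-series justification of nonsingularity is a harmless extra that the paper does not use; it deduces nonsingularity directly from $\sigma_{\min}(M_t)>0$.
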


\begin{proof}
We use the singular value characterizations
$$
\sigma_{\max}(M_t) = \sup_{\|x\|_2=1}\|M_t x\|_2,
\qquad
\sigma_{\min}(M_t) = \inf_{\|x\|_2=1}\|M_t x\|_2.
$$
Since \(M_t = I - E_t\), for any \(x\) with \(\|x\|_2=1\),
$$
\|M_t x\|_2 = \|(I-E_t)x\|_2 = \|x - E_t x\|_2.
$$

\paragraph{Upper bound.}
By the triangle inequality,
$$
\|x - E_t x\|_2 \le \|x\|_2 + \|E_t x\|_2 = 1 + \|E_t x\|_2.
$$
By the definition of \(\|E_t\|_2\),
$$
\|E_t x\|_2 \le \|E_t\|_2\|x\|_2 = \|E_t\|_2.
$$
Therefore \(\|M_t x\|_2 \le 1+\|E_t\|_2\) for all unit vectors \(x\), and taking the supremum gives
$$
\sigma_{\max}(M_t) \le 1+\|E_t\|_2.
$$

\paragraph{Lower bound.}
By the reverse triangle inequality \(\|a-b\|_2 \ge \big|\|a\|_2-\|b\|_2\big|\), we have
$$
\|x - E_t x\|_2 \ge \big|\|x\|_2 - \|E_t x\|_2\big| = 1 - \|E_t x\|_2.
$$
Using again \(\|E_t x\|_2 \le \|E_t\|_2\|x\|_2=\|E_t\|_2\), we obtain
$$
\|M_t x\|_2 \ge 1-\|E_t\|_2
\qquad\text{for all }\|x\|_2=1.
$$
Taking the infimum over unit vectors yields
$$
\sigma_{\min}(M_t) \ge 1-\|E_t\|_2.
$$
Because \(\|E_t\|_2<1\), the right-hand side is strictly positive, hence \(\sigma_{\min}(M_t)>0\) and \(M_t\) is nonsingular.

\paragraph{Condition number bound.}
Finally,
$$
\kappa(M_t) = \frac{\sigma_{\max}(M_t)}{\sigma_{\min}(M_t)}
\le \frac{1+\|E_t\|_2}{1-\|E_t\|_2}.
$$
Squaring both sides and using \(\kappa(M_t^\top M_t)=\kappa(M_t)^2\) gives the last claim.
\end{proof}

\subsubsection{Kalman filtering shrinks the mismatch bound}
QADAPT treats the unknown couplings as latent variables and uses a Kalman filter to refine them from CSD-derived observations.
We formalize the contraction improvement in the standard linear-Gaussian setting.

Let \(c:=\mathrm{vec}(C)\in\mathbb{R}^{D^2}\) be the vectorized coupling parameters and suppose that at each step \(t\) we obtain
an observation \(y_t\in\mathbb{R}^m\) of the form
$$
y_t = H_t c + \nu_t,
\qquad
\nu_t \sim \mathcal{N}(0,R_t),
\qquad
R_t \succ 0.
$$

Let \((\hat c_t,P_t)\) be the Kalman posterior mean and covariance after assimilating \(y_{1:t}\).
In information form, the covariance satisfies
$$
P_t^{-1} = P_{t-1}^{-1} + H_t^\top R_t^{-1}H_t.
$$

\begin{lemma}[Posterior covariance is monotone decreasing]
\label{lem:cov_monotone}
For all \(t\), \(P_t \preceq P_{t-1}\).
\end{lemma}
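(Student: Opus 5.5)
The argument works directly from the information-form recursion $P_t^{-1} = P_{t-1}^{-1} + H_t^\top R_t^{-1} H_t$, combined with the elementary PSD fact recorded in the Notation paragraph: if $0 \prec B \preceq A$, then $A^{-1} \preceq B^{-1}$. The plan is to show $P_{t-1}^{-1} \preceq P_t^{-1}$ in the Loewner order and then invert.

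\emph{Step 1: the update term is PSD.} Since $R_t \succ 0$, its inverse $R_t^{-1}$ is symmetric positive definite. For any $z \in \mathbb{R}^{D^2}$,
$$
z^\top H_t^\top R_t^{-1} H_t z = (H_t z)^\top R_t^{-1} (H_t z) \ge 0,
$$
so $H_t^\top R_t^{-1} H_t \succeq 0$. Adding this to $P_{t-1}^{-1}$ gives $P_{t-1}^{-1} \preceq P_t^{-1}$.

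\emph{Step 2: positive definiteness.} The PSD fact requires $0 \prec B$. Take $B = P_{t-1}^{-1}$ and $A = P_t^{-1}$. I will assume, as is implicit in the information form, that the prior covariance $P_0$ is positive definite. Then by induction on $t$:
\begin{itemize}
\item if $P_{t-1} \succ 0$, then $P_{t-1}^{-1} \succ 0$;
\item so $P_t^{-1} \succeq P_{t-1}^{-1} \succ 0$, which makes $P_t^{-1}$ positive definite and hence invertible;
\item therefore $P_t \succ 0$, and the recursion is well defined.
\end{itemize}

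\emph{Step 3: invert.} Applying the stated implication with $0 \prec B = P_{t-1}^{-1} \preceq A = P_t^{-1}$ gives $A^{-1} \preceq B^{-1}$, that is, $P_t \preceq P_{t-1}$.

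\emph{Main obstacle.} The only delicate point is that the information-form setting has no process-noise prediction step. If the filter includes a random-walk model for $c$, with prior covariance $P_{t-1} + Q_t$, the claim fails in general. I will therefore state the result for the static-parameter model written above, where $c$ is fixed and only measurement updates occur, and note that this is the regime in which the statement holds.

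If one wants to avoid relying on the quoted inversion fact, it can be proved quickly. Starting from $B \preceq A$, conjugate by $B^{-1/2}$ to get $I \preceq B^{-1/2} A B^{-1/2}$. All eigenvalues of $B^{-1/2} A B^{-1/2}$ are at least $1$, so all eigenvalues of its inverse $B^{1/2} A^{-1} B^{1/2}$ are at most $1$, i.e. $B^{1/2} A^{-1} B^{1/2} \preceq I$. Conjugating back by $B^{-1/2}$ gives $A^{-1} \preceq B^{-1}$.
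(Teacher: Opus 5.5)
Your proof is correct and follows the paper's argument essentially verbatim: you show $H_t^\top R_t^{-1}H_t \succeq 0$ to get $P_{t-1}^{-1} \preceq P_t^{-1}$, then invert the Loewner inequality by conjugating with $B^{-1/2}$, exactly as the paper does. The paper leaves two points implicit that you make explicit, and both are worth keeping: the induction from $P_0 \succ 0$ that guarantees $0 \prec B$ and the invertibility of $P_t^{-1}$, and the caveat that the claim holds only for the static model without a process-noise prediction step.
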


\begin{proof}
Since \(R_t\succ 0\), the matrix \(H_t^\top R_t^{-1}H_t\) is positive semidefinite, so
$$
P_t^{-1} - P_{t-1}^{-1} = H_t^\top R_t^{-1}H_t \succeq 0,
\qquad\text{i.e.}\qquad
P_t^{-1} \succeq P_{t-1}^{-1}.
$$
We now use the implication \(A\succeq B \succ 0 \Rightarrow A^{-1}\preceq B^{-1}\).
To prove it: let \(B^{1/2}\) be the PSD square root. Then \(A\succeq B\) implies
$$
B^{-1/2}AB^{-1/2} \succeq I.
$$
Inverting both sides (order reverses under inversion for PSD matrices) yields
$$
(B^{-1/2}AB^{-1/2})^{-1} \preceq I,
$$
and multiplying by \(B^{-1/2}\) on left and right gives \(A^{-1}\preceq B^{-1}\).
Applying this with \(A=P_t^{-1}\) and \(B=P_{t-1}^{-1}\) yields \(P_t\preceq P_{t-1}\).
\end{proof}

Write \(\widehat C_t := \mathrm{unvec}(\hat c_t)\) and \(\Delta_t := \widehat C_t - C\).
Conditioned on \(y_{1:t}\), we have \(c-\hat c_t \sim \mathcal{N}(0,P_t)\).
Thus \(\mathrm{vec}(\Delta_t) \sim \mathcal{N}(0,P_t)\) and
$$
\mathbb{E}\|\Delta_t\|_F^2 = \mathbb{E}\|\mathrm{vec}(\Delta_t)\|_2^2 = \mathrm{tr}(P_t).
$$

\paragraph{High-probability bound}
Let \(n:=D^2\). Let \(g\sim\mathcal{N}(0,I_n)\) and write \(\mathrm{vec}(\Delta_t)=P_t^{1/2}g\).
A standard chi-square concentration inequality (Laurent--Massart)~\cite{massart2007concentration} gives that for any \(s>0\),
$$
\mathbb{P}\!\left(\|g\|_2^2 \ge n + 2\sqrt{ns} + 2s\right) \le e^{-s}.
$$
Setting \(s=\log(1/\delta)\) yields: with probability at least \(1-\delta\),
$$
\|g\|_2 \le \sqrt{n + 2\sqrt{n\log(1/\delta)} + 2\log(1/\delta)}.
$$
Moreover,
$$
\|\Delta_t\|_F = \|\mathrm{vec}(\Delta_t)\|_2 = \|P_t^{1/2}g\|_2 \le \|P_t^{1/2}\|_2\,\|g\|_2 = \sqrt{\|P_t\|_2}\,\|g\|_2,
$$
where the inequality follows from the definition of \(\|P_t^{1/2}\|_2\).
Therefore, with probability at least \(1-\delta\),
$$
\|\Delta_t\|_2 \le \|\Delta_t\|_F
\le \sqrt{\|P_t\|_2}\,\sqrt{n + 2\sqrt{n\log(1/\delta)} + 2\log(1/\delta)}.
$$

Combining this with Corollary~\ref{cor:contract_condition} yields an explicit sufficient condition on the Kalman covariance
for strict contraction (with high probability):
if
$$
\|C^{-1}\|_2 \sqrt{\|P_t\|_2}\,\sqrt{n + 2\sqrt{n\log(1/\delta)} + 2\log(1/\delta)} < \frac12,
$$
then \(\|E_t\|_2<1\) and Theorem~\ref{thm:ideal_cancel} gives geometric decay \(\|e_t\|_2\le\rho^t\|e_0\|_2\) on the linear model.

\subsubsection{Robustness: policy error and nonlinear remainder}
We now re-introduce (i) imperfect virtual actions and (ii) the Taylor remainder from Lemma~\ref{lem:taylor_remainder}.
Assume we pick
$$
\Delta v_t = -e_t + \eta_t
$$
for some action error \(\eta_t\), and keep the first-order model around \(u_t\).
Let \(C_t := J(u_t)\). By Lemma~\ref{lem:taylor_remainder} applied at \(u_t\), there exists \(r_t\) such that
$$
e_{t+1} = e_t + C_t\Delta u_t + r_t,
\qquad
\|r_t\|_2 \le \frac{L}{2}\|\Delta u_t\|_2^2.
$$
With \(\Delta u_t=\widehat C_t^{-1}\Delta v_t\), we obtain
$$
e_{t+1}
= \big(I - C_t\widehat C_t^{-1}\big)e_t + C_t\widehat C_t^{-1}\eta_t + r_t.
$$
Taking norms and using the definition of \(\|\cdot\|_2\) as in Lemma~\ref{lem:Et_delta} yields the explicit inequality
$$
\|e_{t+1}\|_2
\le \|I-C_t\widehat C_t^{-1}\|_2\,\|e_t\|_2
   + \|C_t\widehat C_t^{-1}\|_2\,\|\eta_t\|_2
   + \frac{L}{2}\|\widehat C_t^{-1}\|_2^2 \,\|\Delta v_t\|_2^2.
$$
In particular, if \(\|I-C_t\widehat C_t^{-1}\|_2\le\rho<1\) uniformly and \(\eta_t\equiv 0\),
then since \(\|\Delta v_t\|_2=\|e_t\|_2\),
$$
\|e_{t+1}\|_2 \le \rho\|e_t\|_2 + \frac{L}{2}\|\widehat C_t^{-1}\|_2^2\|e_t\|_2^2,
$$
which implies strict contraction for all \(\|e_t\|_2\) in a sufficiently small neighborhood (standard local-stability argument).

\begin{proposition}[Objective-level factorization induced by virtualization]
Let
\[
J_t(\Delta u)=\frac12 \|e_t + C_t \Delta u\|_2^2,
\qquad
e_t := x(u_t)-x^\star,
\qquad
C_t:=\nabla x(u_t).
\]
Then:

(i) \(J_t\) is separable across agents, i.e.
\[
J_t(\Delta u)=c_t+\sum_{i=1}^D f_{t,i}(\Delta u_i),
\]
if and only if \(C_t^\top C_t\) is diagonal.

(ii) If \(C_t\) is invertible and we use exact virtual coordinates
\[
\Delta u=C_t^{-1}\Delta v,
\]
then
\[
J_t(\Delta v)=\frac12 \sum_{i=1}^D (e_{t,i}+\Delta v_i)^2,
\]
so the one-step control objective is exactly separable.

(iii) With an estimate \(\hat C_t\), define
\[
E_t := I - C_t \hat C_t^{-1}.
\]
Then
\[
J_t^{\mathrm{virt}}(\Delta v)
=
\frac12\|e_t + (I-E_t)\Delta v\|_2^2.
\]
If
\[
\widetilde J_t(\Delta v):=\frac12\|e_t+\Delta v\|_2^2,
\]
then
\[
|J_t^{\mathrm{virt}}(\Delta v)-\widetilde J_t(\Delta v)|
\le
\|E_t\|_2 \|e_t+\Delta v\|_2 \|\Delta v\|_2
+\frac12 \|E_t\|_2^2 \|\Delta v\|_2^2.
\]
\end{proposition}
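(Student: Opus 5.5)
All three parts come from expanding the quadratic $J_t$ and bookkeeping its terms. We do not need any of the earlier contraction results, only the definition of $\|\cdot\|_2$ and the identity $M_t = C_t\hat C_t^{-1} = I - E_t$ already used in Lemma~\ref{lem:B8_precond_kappa}.

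\textbf{Part (i).} Expand
\[
J_t(\Delta u)=\tfrac12\|e_t\|_2^2 + e_t^\top C_t\Delta u + \tfrac12 \Delta u^\top C_t^\top C_t \Delta u .
\]
The constant term and the linear term $\sum_i (C_t^\top e_t)_i \Delta u_i$ are always separable. Write $G:=C_t^\top C_t$; the only cross terms are $\sum_{i\neq j} G_{ij}\Delta u_i\Delta u_j$.

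For sufficiency: if $G$ is diagonal, set $c_t=\tfrac12\|e_t\|_2^2$ and $f_{t,i}(s)=(C_t^\top e_t)_i s+\tfrac12 G_{ii}s^2$.

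For necessity: suppose $J_t=c_t+\sum_i f_{t,i}(\Delta u_i)$. Then every mixed second partial derivative vanishes, and $\partial^2 J_t/\partial\Delta u_i\partial\Delta u_j = G_{ij}$, so $G_{ij}=0$ for $i\ne j$. A small subtlety is that the $f_{t,i}$ need not be assumed differentiable. To avoid this, use a finite-difference version: for any separable function,
\[
J(\Delta u+se_i+s'e_j)-J(\Delta u+se_i)-J(\Delta u+s'e_j)+J(\Delta u)=0 .
\]
For the quadratic, the same expression equals $G_{ij}ss'$. Hence $G_{ij}=0$.

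\textbf{Part (ii).} Substitute $\Delta u=C_t^{-1}\Delta v$, so that $C_t\Delta u=\Delta v$. This gives
\[
J_t=\tfrac12\|e_t+\Delta v\|_2^2=\tfrac12\sum_i (e_{t,i}+\Delta v_i)^2 .
\]
Equivalently, the Hessian in $\Delta v$ is $I$, which is diagonal, so this is consistent with part (i).

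\textbf{Part (iii).} Substituting $\Delta u=\hat C_t^{-1}\Delta v$ gives $C_t\Delta u=(I-E_t)\Delta v$, which is the displayed formula. Next set $a:=e_t+\Delta v$ and $b:=E_t\Delta v$, so that $J_t^{\mathrm{virt}}=\tfrac12\|a-b\|_2^2$ and $\widetilde J_t=\tfrac12\|a\|_2^2$. Their difference is
\[
J_t^{\mathrm{virt}}-\widetilde J_t=-a^\top b+\tfrac12\|b\|_2^2 .
\]
Apply the triangle inequality, then Cauchy–Schwarz $|a^\top b|\le\|a\|_2\|b\|_2$, then $\|b\|_2\le\|E_t\|_2\|\Delta v\|_2$ from the definition of the spectral norm. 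This yields exactly
\[
|J_t^{\mathrm{virt}}-\widetilde J_t|\le \|E_t\|_2\|e_t+\Delta v\|_2\|\Delta v\|_2+\tfrac12\|E_t\|_2^2\|\Delta v\|_2^2 .
\]

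The only step needing care is the necessity direction of (i), which is handled by the finite-difference argument. Everything else is direct algebra.
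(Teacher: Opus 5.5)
Your proposal is correct and follows the paper's proof essentially step for step: the same expansion of $J_t$ with mixed partials $(C_t^\top C_t)_{ij}$ for (i), direct substitution for (ii), and the decomposition $J_t^{\mathrm{virt}}-\widetilde J_t=-\langle e_t+\Delta v,\,E_t\Delta v\rangle+\tfrac12\|E_t\Delta v\|_2^2$ bounded by Cauchy--Schwarz for (iii). Your mixed second-difference argument for the necessity direction of (i) slightly tightens the paper's one-line remark that vanishing mixed partials are ``equivalent to separability for this quadratic objective,'' because it does not require the $f_{t,i}$ to be differentiable.
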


\begin{proof}
Expand
\[
J_t(\Delta u)
=
\frac12 e_t^\top e_t + (C_t^\top e_t)^\top \Delta u
+ \frac12 \Delta u^\top C_t^\top C_t \Delta u.
\]
Hence for \(i\neq j\),
\[
\frac{\partial^2 J_t}{\partial \Delta u_i \partial \Delta u_j}
=
(C_t^\top C_t)_{ij}.
\]
So all mixed partials vanish if and only if \(C_t^\top C_t\) is diagonal, which is equivalent to separability for this quadratic objective.

Now substitute \(\Delta u=C_t^{-1}\Delta v\):
\[
J_t(\Delta v)=\frac12 \|e_t+\Delta v\|_2^2
=
\frac12 \sum_i (e_{t,i}+\Delta v_i)^2.
\]

For the approximate case, write \(M_t=C_t\hat C_t^{-1}=I-E_t\), so
\[
J_t^{\mathrm{virt}}(\Delta v)
=
\frac12\|e_t + M_t\Delta v\|_2^2.
\]
Then
\[
J_t^{\mathrm{virt}}(\Delta v)-\widetilde J_t(\Delta v)
=
-\langle e_t+\Delta v,\; E_t\Delta v\rangle
+\frac12 \|E_t\Delta v\|_2^2,
\]
and the bound follows from Cauchy--Schwarz and
\[
\|E_t\Delta v\|_2 \le \|E_t\|_2\|\Delta v\|_2.
\]
\end{proof}

This shows that other MARL techniques such as MADDPG, QMIX, and FACMAC learn coordination in a fixed action space through centralized critics or value mixing. QADAPT instead tries to make the control problem itself additive by learning a physically grounded change of coordinates.

\newpage 

\section{Details on the Simulated Quantum Dot Environment}
\label{app:params}

In our simulator, quantum dot array configurations are randomized to ensure a broad and realistic training environment, including noise strengths, electrostatic couplings, and lever arms. We also reduce the contrast of charge stability diagram features toward the edges of our measurement environment, mimicking real devices whereby highly occupied quantum dots crossover into a classical transport regime and transition lines vanish. By accounting for such physical behaviors, we aim to bridge the sim-2-real gap of device tuning. 

\begin{table*}[h]
\caption{Details on the base QArray quantum dot simulator. When a min and a max value are specified, we sample uniformly at random between those limits at each reset of the environment. Cxy denotes the cross-capacitance matrix between potentials of type x and of type y, where d = dot potentials, g = plunger gates, b = barrier gates.}
\label{tab:parameters}
\centering
\small
\setlength{\tabcolsep}{6pt}
\begin{tabular}{lccc}
\toprule
Parameter & Min & Max & Value \\
\midrule
Cdd nearest coupling & 0 & 0.2 & - \\
Cdd second nearest coupling & 0 & 0.1 & - \\
Cgd lever arm & 0.95 & 1.0 & -\\
Cgd nearest coupling & 0.3 & 0.7 & - \\
Cgd second nearest coupling & 0.01 & 0.3 & - \\
Cgd third nearest coupling & 0 & 0.01 & - \\
Dot to sensor coupling & 0.035 & 0.05 & - \\
Cbd nearest coupling & 0.04 & 0.08 & - \\
Cbd second nearest coupling & 0.01 & 0.03 & - \\
Cbd third nearest coupling & 0.005 & 0.015 & - \\
Cbg nearest coupling & 0.08 & 0.15 & - \\
Cbg second nearest coupling & 0.03 & 0.18 & - \\
Cbg third nearest coupling & 0.01 & 0.03 & - \\
Cbb nearest coupling & 0.03 & 0.08 & - \\
Cbb second nearest neighbour & 0.01 & 0.03 & - \\
Cbb third nearest neighbour & 0.005 & 0.015 & - \\
Barrier to sensor coupling & 0.0003 & 0.001 & - \\
Telegraph noise transition probability & 0 & 0.01 & - \\
Telegraph noise amplitude & 0 & 0.012 \\
Latching noise lead coupling probability & 0.2 & 1.0 & - \\
Latching noise inter-dot coupling probability & 0.2  & 1.0 & - \\
Temperature (mK) & 50 & 200 & - \\
Coulomb peak width & 0 & 0.4 & - \\
Base tunnel coupling & 0.5 & 3.0 & - \\
Tunnel coupling exponential factor ($\alpha$) & 0.0001 & 0.0008 & - \\
Max charge carriers & - & - & 4 \\
Optimal plunger dot charge occupancy & - & - & 1 \\
Optimal sensor charge occupancy (fractional) & - & - & 0.53 \\
\bottomrule
\end{tabular}
\end{table*}

\begin{table*}[h]
\caption{Details on the gym environment. When a min and a max value are specified, we sample uniformly at random between those limits at each reset of the environment. Distances are in units of simulator volts, which has dimensions of voltage but scaled up to equal the transition line spacing.}
\label{tab:parameters}
\centering
\small
\setlength{\tabcolsep}{6pt}
\begin{tabular}{lccc}
\toprule
Parameter & Min & Max & Value \\
\midrule
Total plunger voltage range & 80.0 & 100.0 & - \\
Scan size & 3.0 & 4.0 & - \\
Total barrier voltage range & 20.0 & 30.0 & - \\
Lower radius for zero additive white noise & 20.0 & 30.0 & - \\
Distance to maximum additive white noise & 5.0 & 10.0 & - \\
Distance to total white noise & 30.0 & 40.0 & - \\
Maximum additive white noise amplitude & - & - & 0.05 \\
Distance to zero reward & - & - & 40.0 \\
Distance to half reward & - & - & 1.0 \\
Distance to zero barrier reward & - & - & 6.0 \\
\bottomrule
\end{tabular}
\end{table*}

\section{DreamerV3 Setup}

We include DreamerV3~\cite{dreamer} as a representative modern model-based reinforcement learning baseline.
DreamerV3 learns a latent ``world model'' from experience and optimizes a policy by imagining trajectories in that latent space. In our setting, each environment step corresponds to a full measurement cycle
(Section~\ref{methods}), i.e., acquiring all neighbor-pair CSDs, updating the virtualization estimate,
then applying gate updates. DreamerV3 models the predictive distribution over next-step observations and rewards through a recurrent
state-space model (RSSM) operating on a learned latent state $z_t$.
In our case, the transition captures the effect of gate updates on subsequent measurement outcomes
(i.e., changes in CSD geometry and reward).

\section{Array Tuning Agent Rewards}

\begin{figure*}[ht]
  \vskip 0.2in
  \begin{center}
    \centerline{\includegraphics[width=0.8\textwidth]{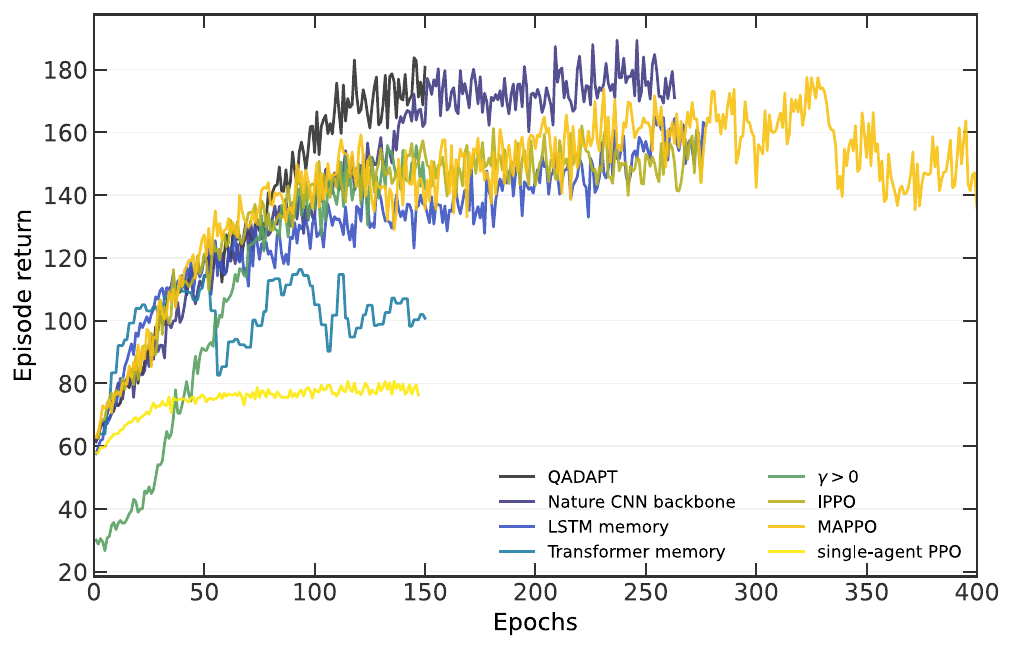}}
    \caption{Average per-agent reward for QADAPT trained on a 4-dot array, in addition to each version of the ablated base model, detailed in Section~\ref{sec:experiments}. In all benchmarks, trained parameters are always taken from the epoch in which models performed best.}
    \label{Fig4}
  \end{center}
\end{figure*}

\newpage

\section{Array Tuning Convergence Over Time}

\begin{figure*}[ht]
  \begin{center}
    \centerline{\includegraphics[width=\textwidth]{Fig5.pdf}}
    \caption{Trained agent behavior sampled over 100 episodes for a 4-dot system. The four colors in each graph represent the signed distance to the ground truth position in voltage space for each of the plunger gate agents.}
    \label{Fig5}
  \end{center}
\end{figure*}

\newpage

\section{Training Hyperparameters and Compute Resources}

\begin{table*}[h]
\caption{Hyperparameters for PPO.}
\label{tab:parameters}
\centering
\small
\setlength{\tabcolsep}{6pt}
\begin{tabular}{lccc}
\toprule
Hyperparameter & Value \\
\midrule
Batch size & 16384 \\
yGrad clip & 40 \\
Minibatch size & 2048 \\
Epochs & 10 \\
Learning rate & 3e-5 \\
$\gamma$ & 0 \\
$\lambda$ & 0 \\
Clip & 0.2 \\
Entropy coefficient & 0.01 \\
Value function loss coeff & 0.5 \\
KL target & 0.01 \\
Predicted log std clip bounds & (-5, 2) \\
\bottomrule
\end{tabular}
\end{table*}

\section{Extension to Superconducting Qubits}
\label{app:super}

In this appendix, we apply QADAPT to the tuning of superconduting qubits in the presence of simultaneous drives. We largely follow the formulation of the control problem for flux-tunable transmon qubits, as laid out in \cite{wesdorp2026mitigating}; however, unlike in that work, we do not rely on analytical formulae and also extend the number of variables that are available for tuning. 

Conventionally, the tune-up of superconducting qubits is based on a series of experiments, each of which targets one or more tunable parameters to varying levels of precision. A good overview is provided in \cite{gao2021practical} and its references. Here, we consider five key parameters: bare qubit frequency, drive frequency, drive amplitude, phase, and a corrective envelope parameter (see Appendix~\ref{app:qubit_h} for full details). We probe the quality of qubits in response to these parameters using just one type of simulated experiment, known as an all-$XY$ calibration sequence \cite{reed2013entanglement}. In this protocol, 21 unique combinations of single-qubit gates are applied, and the resulting qubit state is measured. The result across all 21 operations ideally forms a ``staircase'' pattern where the qubit remains either entirely in the ground state, becomes fully excited, or enters into a perfect superposition of ground and excited states. The suboptimal calibration of any one parameter can lead to a deviation from the ideal staircase pattern. 

The sequential optimization of transmon qubit parameters does not guarantee good performance in a quantum processor due to the presence of drive-crosstalk. Furthermore, while individual parameter miscalibrations can lead to qualitatively different error syndromes, the all-$XY$ output can produce degenerate signatures when multiple parameters are far from their optimal configuration. These two effects together make transmon qubit tuning a challenge control problem, yet whose physical structure lends itself naturally to QADAPT.

We apply QADAPT to the transmon qubit tuning problem with minor modifications. Inputs, rather than being charge stability diagrams (CSDs) are now vectors of length 21. Actions are qubit parameter updates, which are split among two agent types, realized via shared policies -- one type controls $\{ \omega_{01}, \omega_d, \phi \}$ and the other controls $\{\Omega, \beta \}$. Refer to Appendix~\ref{app:qubit_h} for parameter descriptions and Appendix~\ref{app:trans_param} for their ranges. This parameter split is motivated by the physical similarities in the errors these parameters produce. For simplicity, we factor the action space directly based on the coupling between parameter adjustments and all-$XY$ outputs (see Appendix~\ref{app:super_sims}). The resulting cross-talk matrix is refined online at each time step. The tuning results are shown Figs.~\ref{Fig6} \& \ref{Fig7}.

\begin{figure*}[ht]
  \begin{center}
    \centerline{\includegraphics[width=\textwidth]{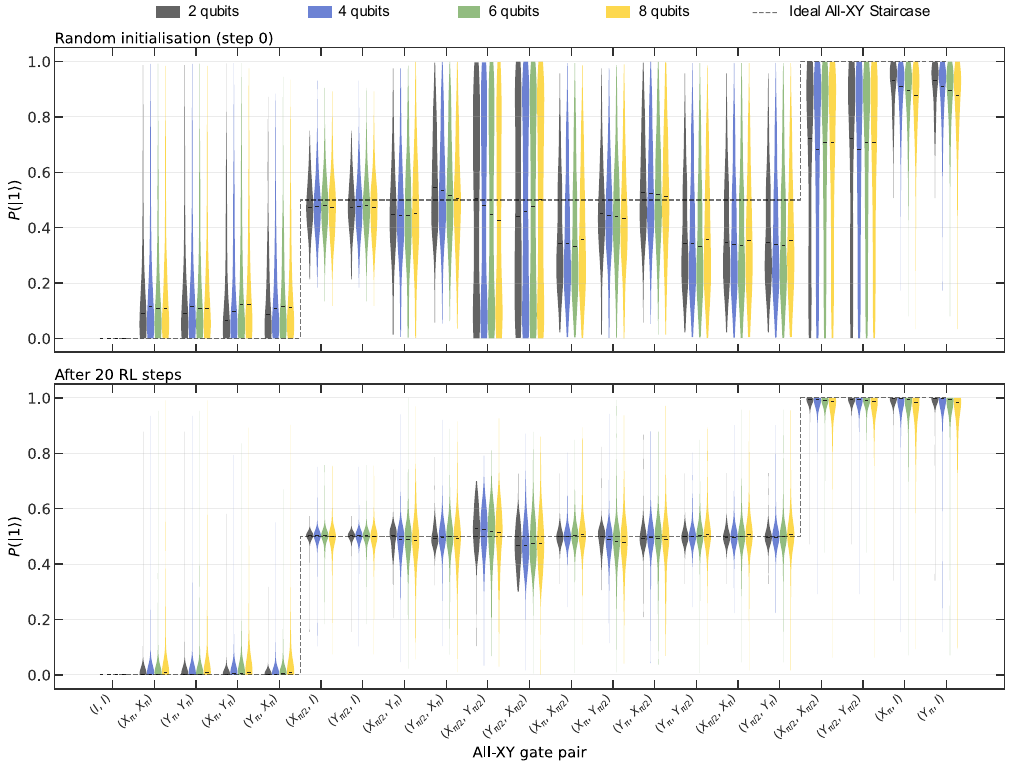}}
    \caption{\textbf{All-$XY$ Sequence Calibrated using QADAPT.} After random initialization (top), QADAPT reliably tunes superconducting qubit parameters (bare qubit frequency, drive frequency, amplitude, phase, and DRAG coefficient) to near-optimal values in 20 steps (bottom). All violin plots represent distributions over 100 tuning runs, and colors correspond to various qubit counts. The same policy, trained on a four-qubit system, is used across all shown qubit counts.}
    \label{Fig6}
  \end{center}
\end{figure*}

\begin{figure*}[ht]
  \begin{center}
    \centerline{\includegraphics[width=\textwidth]{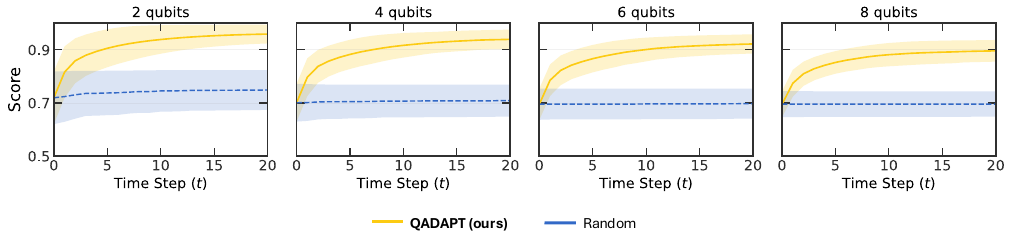}}
    \caption{\textbf{Convergence of Superconducting Qubit Tuning}. QADAPT generalizes well to the superconducting setting, also scaling zero-shot to qubit numbers of 2, 6 and 8 using a policy trained on 4 qubits. The score is defined as the normalized overlap with the target excited state probability, averaged over qubits and gate-pairs in the all-$XY$ calibration sequence. Lines represent averages over 100 runs, with an envelope of half a standard deviation. Initial parameter configurations are randomized in each run.}
    \label{Fig7}
  \end{center}
\end{figure*}

\newpage
\section{Transmon Qubit Hamiltonian}
\label{app:qubit_h}

The Hamiltonian of a transmon qubit $i$ in the presence of drive cross-talk from $n$ other, simultaneously driven, qubits, is given by:
\begin{equation}
\label{eq:hamiltonian}
    H = \omega_{01}^ia^\dagger a + \frac{\alpha^i}{2}a^\dagger a^\dagger a a + \mathrm{i}\sum_{j=1}^{n+1} \lambda_{ij}v^{j}(t)(a^\dagger - a),
\end{equation}
where $a^\dagger$ and $a$ are the creation and annihilation operators in the subspace of qubit $i$, $\omega_{01}^i$ is the qubit's bare transition frequency, $\alpha^i = \omega_{12}^i - \omega_{01}^i$ is the qubit anharmonicity, $\lambda_{ij}$ is the relative cross-talk magnitude from qubit $j$ onto qubit $i$, and $v^j(t)$ is the applied driving field, modulated by complex driving envelopes according to
\begin{equation}
\label{eq:drive}
    v^j(t) = s_I^j(t) \cos{(\omega_d^jt+ \phi^j)} + s_Q^j(t) \sin{(\omega_d^jt + \phi^j)} \qquad \forall \quad j \in \{0, ..., n\}.
\end{equation}
Here, $\omega_d^j$ and $\phi^j$ denote the angular drive frequency and phase on qubit $j$, and $s_I^j(t)$ and $s_Q^j(t)$ are time-dependent in-phase and quadrature drive envelopes. We employ raised-cosine pulse envelopes and standard Derivative Reduction by Adiabatic Gate (DRAG) correction, which relate $s_I^j$ and $s_Q^j$ in the following way,
\begin{align}
    s_I^j(t) &= \frac{\Omega^j}{2} \left[ 1 - \cos\left(\frac{2\pi t}{t_g}\right) \right], \label{eq:envelopes1}\\
    s_Q^j(t) &= -\left( \frac{\beta^j}{\alpha^j} \right) \frac{\mathrm{d}s_I^j}{\mathrm{d}t} = -\left( \frac{\beta^j}{\alpha^j} \right) \cdot \frac{\Omega^j \pi}{t_g} \sin\left(\frac{2 \pi t}{t_g}\right) \label{eq:envelopes2}, \\
\end{align}
where $\Omega^j$ is the nominal Rabi rate in $\mathrm{rad/ns}$, $t_g$ is the gate time, and $\beta^j$ is the DRAG coefficient. 

\subsection{Hardware effects}

Experimentally, the drive amplitude in volts $V$ is related to the Rabi rate via $\Omega^j = \eta^j \cdot V$, where $\eta^j$ is the capacitive coupling strength between the transmon and its drive line. This coupling is predominantly determined by device geometry, which is fixed at fabrication, and pre-calibrated in earlier tuning stages. However, we account for imperfect calibration, or changes in line attenuation between qubits, by introducing a prefactor to $\Omega^j$. Adjustments to the raw $\Omega^j$ by our algorithm should therefore be interpreted as corrective updates to the voltage amplitude of the qubit drive's carrier envelope. 

Similarly, we introduce group and phase delays to mimic the behavior of pulses propagating through real cables, which in general are also unique for each qubit. In summary, this transforms $t$, $\phi^j$ and $\Omega^j$ parameters appearing in Eqs.~(\ref{eq:drive}\textup{--}\ref{eq:envelopes2}) as follows,
\begin{align}
    t &\rightarrow t + t_\delta^j, \\
    \phi^j &\rightarrow \phi^j + \phi_\delta^j, \\
    \Omega^j &\rightarrow A^j \Omega^j,
\end{align}
and prevents agent policies from collapsing toward trivial parameter settings. All parameter ranges used for training are summarized in Table.~\ref{tab:super_ranges}

We further note that, in practice, the transmon qubit frequency is tuned via an applied DC current $I_{\mathrm{bias}}^j$ through an on-chip bias line. This current produces a flux equal to $\Phi^j = M\cdot I_{\mathrm{bias}}^j$, which scales $\omega_{01}^j$ according to \cite{koch2007charge},
\begin{equation}
    \omega_{01}^j (\Phi^j) \propto \left| \cos{\left( \frac{\pi \Phi^j}{\Phi_0} \right)}\right|^{\frac{1}{2}},
\end{equation}
where $\Phi_0 = h/2e$ is the flux quantum. By applying parameter updates onto $\omega_{01}^j$ directly, phase accumulation during the simulated gate becomes linearized, which better conditions the action space for agent learning. The characterization of $\omega_{01}^j$ with respect to $I_{\mathrm{bias}}^j$ is routinely done in experimental settings, allowing a direct mapping to be made between the two variables at deployment. For fixed-frequency qubits, parameter updates to $\omega_{01}$ would drop out from the tuning algorithm.

The effect of flux noise and flux cross-talk between qubits, as well as associated changes in the anharmonicity $\alpha^j$ caused by second order effects upon varying $\Phi^j$, are outside the scope of this work. We further neglect the weak dependence of the Rabi rate $\Omega^j$ on the flux operating point. A more complete treatment would explicit parametrize both $\alpha^j$ and $A^j$ as functions of $\omega_{01}^j$, which could be pre-calibrated via Rabi experiments at multiple flux operating points. However, since our algorithm treats each round of all-$XY$ measurements independently, with no temporal credit assignment, the new effective Rabi rate would become implicitly absorbed into $A^j$ in a real experiment for each new operating point. The inability to track flux-dependent dynamics is therefore not a limitation of the model, but rather a consequence of the immediate-feedback reward structure.

\vfill

\section{Transmon Qubit Simulation Parameters}
\label{app:trans_param}

\begin{table*}[h]
\caption{Parameter ranges used in transmon qubit simulations. When a min and a max value are specified, we sample uniformly at random between those limits at each reset of the environment. For values sampled from a normal distribution, the mean and standard deviation are indicated.}
\label{tab:super_ranges}
\centering
\small
\setlength{\tabcolsep}{6pt}
\begin{tabular}{lccc}
\toprule
Parameter & Min & Max & Value \\
\midrule
Number of qubits & - & - & 4 \\
Fock space truncation & - & - & 3 \\
$\omega_{01} / 2\pi$ (GHz) & 4 & 6 & -\\
$\alpha / 2\pi$ (MHz) & $-$300 & $-$200 & - \\
$\lambda_{ii'}$ & 0.0 & 0.5 & - \\
$t_g$ (ns) & 16 & 24 & - \\
$\omega_d / 2\pi$ (GHz) & - & - & $\sim \mathcal{N}(\omega_{01} / 2\pi, 0.01)$ \\
$\phi$ (rad) & $-\pi$ & $\pi$ & - \\
$\Omega$ (rad/ns) & - & - & $ 2\pi/t_g (1+ \delta), \quad \delta \sim \mathcal{N}(0, 0.1)$ \\
$\beta$ & 0.0 & 1.5 & - \\
$t_\delta$ (ns) & $-$2 & 2 & - \\
$\phi_\delta$ (rad) & $-\pi$ & $\pi$ & - \\
$A$ & 0.85 & 1.15 & - \\
\bottomrule
\end{tabular}
\end{table*}

\vfill

\begin{table*}[h]
\caption{Parameter bounds for transmon qubit simulations. Min and max values are stated relative to their initialization at the start of each training episode.}
\label{tab:parameters}
\centering
\small
\setlength{\tabcolsep}{6pt}
\begin{tabular}{lccc}
\toprule
Parameter & Min & Max \\
\midrule
$\omega_{01} / 2\pi$ (MHz) & $-$150 & 0 \\
$\omega_d / 2\pi$ (MHz) & $\omega_{01} -$ 20 & $\omega_{01} +$ 20 \\
$\phi$ (rad) & $-\pi$ & $\pi$\\
$\Omega$ (rad/ns) & 0.8 $\times 2\pi/t_g$ & 1.2 $\times 2\pi/t_g$\\
$\beta$ & 0.0 & 1.5 \\
\bottomrule
\end{tabular}
\end{table*}

\newpage 

\section{All-$XY$ Calibration Sequence}

The 21 gate pairs that comprise the all-$XY$ calibration sequence are provided in Table~\ref{tab:allxy} together with their ideal excited state probabilities. The drive amplitude $\Omega^j$ enters the Hamiltonian linearly, such that the resulting rotation angle on qubit $j$ scales proportionally with $\Omega^j$. Likewise, the drive phase $\phi^j$ enters the Hamiltonian as a rigid rotation of the drive vector in the $IQ$-plane, and so the axis of rotation is determined directly by $\phi^j$, independently of the rotation angle. As a result, all gates in the calibration sequence can be expressed using amplitude scalings and/or phase offsets relative to a $X_\pi$ base pulse. In real hardware, phase offsets can implemented at no extra cost using programmable `virtual' phases.

\begin{table*}[h]
\caption{Gate pairs participating in the all-$XY$ calibration sequence. The qubit is assumed to be in the ground state $\ket{0}$ at the start of each gate-pair. Amplitude scalings and phase offsets are stated relative to a perfectly calibrated $X_\pi$ gate pulse.}
\label{tab:allxy}
\centering
\small
\setlength{\tabcolsep}{6pt}
\begin{tabular}{ccccc}
\toprule
Index & Gate pair & Amplitude scaling & Phase offset & $P\left( \ket{1} \right) $ \\
\midrule
1 & $I,I$ & 0.0, 0.0 & 0, 0 & 0.0 \\
2 & $X_\pi,X_\pi$ & 1.0, 1.0 & 0, 0 & 0.0 \\
3 & $Y_\pi,Y_\pi$ & 1.0, 1.0 & $\frac{\pi}{2}$, $\frac{\pi}{2}$ & 0.0 \\
4 & $X_\pi,Y_\pi$ & 1.0, 1.0 & 0, $\frac{\pi}{2}$ & 0.0 \\
5 & $Y_\pi,X_\pi$ & 1.0, 1.0 & $\frac{\pi}{2}$, 0 & 0.0 \\
6 & $X_{\frac{\pi}{2}},I$ & 0.5, 0.0 & 0, 0 & 0.5 \\
7 & $Y_{\frac{\pi}{2}},I$ & 0.5, 0.0 & $\frac{\pi}{2}$, 0 & 0.5 \\
8 & $X_{\frac{\pi}{2}},Y_\pi$ & 0.5, 1.0 & 0, $\frac{\pi}{2}$ & 0.5 \\
9 & $Y_{\frac{\pi}{2}},X_\pi$ & 0.5, 1.0 & $\frac{\pi}{2}$, 0 & 0.5 \\
10 & $X_{\frac{\pi}{2}},Y_{\frac{\pi}{2}}$ & 0.5, 0.5 & 0, $\frac{\pi}{2}$ & 0.5 \\
11 & $Y_{\frac{\pi}{2}},X_{\frac{\pi}{2}}$ & 0.5, 0.5 & $\frac{\pi}{2}$, 0 & 0.5 \\
12 & $X_\pi,X_{\frac{\pi}{2}}$ & 1.0, 0.5 & 0, 0 & 0.5 \\
13 & $X_\pi,Y_{\frac{\pi}{2}}$ & 1.0, 0.5 & 0, $\frac{\pi}{2}$ & 0.5 \\
14 & $Y_\pi,X_{\frac{\pi}{2}}$ & 1.0, 0.5 & $\frac{\pi}{2}$, 0 & 0.5 \\
15 & $Y_\pi,Y_{\frac{\pi}{2}}$ & 1.0, 0.5 & $\frac{\pi}{2}$, $\frac{\pi}{2}$ & 0.5 \\
16 & $X_{\frac{\pi}{2}},X_\pi,$ & 0.5, 1.0 & 0, 0 & 0.5 \\
17 & $Y_{\frac{\pi}{2}},Y_\pi,$ & 0.5, 1.0 & $\frac{\pi}{2}$, $\frac{\pi}{2}$ & 0.5 \\
18 & $X_{\frac{\pi}{2}},X_{\frac{\pi}{2}}$ & 0.5, 0.5 & 0, 0 & 1.0 \\
19 & $Y_{\frac{\pi}{2}},Y_{\frac{\pi}{2}}$ & 0.5, 0.5 & $\frac{\pi}{2}$, $\frac{\pi}{2}$ & 1.0 \\
20 & $X_\pi,I$ & 1.0, 0.0 & 0, 0 & 1.0 \\
21 & $Y_\pi,I$ & 1.0, 0.0 & $\frac{\pi}{2}$, 0 & 1.0 \\
\bottomrule
\end{tabular}
\end{table*}

\newpage 

\section{Implementation Details of Qubit Dynamics}
\label{app:super_sims}

All transmon qubit simulations are performed by solving the time-dependent Schrödinger equation under the Hamiltonian in \ref{eq:hamiltonian} using \texttt{dynamiqs} \cite{guilmin2025dynamiqs}, a JAX-based quantum dynamics library. We rely heavily on the library's batching functionality to reduce compute time: all $N_\text{QUBITS} \times N_\text{AllXY} = 4 \times 21$ independent quantum systems are evolved simultaneously within a single \texttt{sesolve} call by constructing a batched Hamiltonian of shape $(N_\text{QUBITS}\times N_\text{AllXY}\times 3 \times 3)$, avoiding any Python-level loop over sequences or qubits. 

Each time-dependent drive is implemented as a scalar coefficient $f(t)$ that modulates the Hamiltonian operator $\mathrm{i}(a^\dagger - a)$. Cross-talk between drive lines is incorporated via a linear transformation of the per-qubit coefficients at each time step. $f$ is passed as a Python callable to \texttt{dq.modulated}, which \texttt{dynamiqs} evaluates on demand at each step taken by the adaptive ODE solver. To account for per-qubit timing offsets $t_\delta^j$, the integration window is expanded beyond $[0, 2t_g]$ to capture both earliest and latest arriving pulses, while each qubit's envelope is still evaluated in its own shifted time coordinate $\tau^j = t - t_\delta^j$, ensuring no pulse is truncated regardless of the sign of the delay.   

The full Jacobian of all-$XY$ staircase outcomes with respect to all qubit control parameters is of shape $(N_\text{QUBITS}\times N_\text{AllXY}\times N_\text{QUBITS}\times N_\text{params})$, and is also computed in a single forward pass using \texttt{jax.jacfwd}, which leverages \texttt{dynamiqs}' forward sensitivity mode \texttt{dq.gradient.Forward}. This mode offer a speed advantage when the number of parameters is fewer than the number of outputs, which is true in our case. The resulting Jacobian blocks are used to construct the cross-talk compensation tensor via pseudoinversion.

\section{Compute Resources and Cost}

We iterated on various modifications of QADAPT on a cluster of NVIDIA RTX A4000 GPUs with 20 GB of GPU memory. All our hero runs were performed on NVIDIA 8xH100 GPUs with 80 GB of GPU memory. The cost of these hero runs add up to 2500 USD on the modal platform.    

\section{Code Availability}

All the code for QADAPT is available through the anonymized url: \url{https://anonymous.4open.science/r/rl-agent-for-qubit-array-tuning-71D1/README.md}. We present the modal entry points to run this code as well.


\end{document}